\documentclass[11pt]{article}

\usepackage[margin=1in]{geometry}
\usepackage{fontspec}
\usepackage{amsmath,amssymb,amsthm}
\usepackage{booktabs}
\usepackage{graphicx}
\usepackage{float}
\usepackage[table]{xcolor}
\usepackage[nocompress]{cite}
\usepackage{hyperref}
\usepackage{microtype}

\hypersetup{
  colorlinks=true,
  linkcolor=blue!45!black,
  citecolor=blue!45!black,
  urlcolor=blue!45!black
}

\newtheorem{assumption}{Assumption}
\newtheorem{lemma}{Lemma}
\newtheorem{proposition}{Proposition}
\newtheorem{corollary}{Corollary}

\newcommand{\runningtitle}{GradRepair-ODE}
\newcommand{\runningauthors}{Z. Bi and X. L. Chia}

\title{\textbf{GradRepair-ODE: Certified Gradient Repair for Neural ODE Training}}
\author{
Ziqian Bi\thanks{\href{https://scholar.google.com/citations?view_op=view_org&hl=zh-CN&org=9642744976452465328}{Purdue University}, West Lafayette, IN, USA (\href{mailto:bi32@purdue.edu}{bi32@purdue.edu}).}
\and
Xin Liang Chia\thanks{\href{https://scholar.google.com/citations?view_op=view_org&hl=zh-CN&org=11940724553757128128}{Rice University}, Houston, TX, USA (\href{mailto:xc71@rice.edu}{xc71@rice.edu}).}
}
\date{}

\begin{document}
\maketitle

\begin{center}
\small
\textbf{Short title:} \runningtitle
\qquad
\textbf{Authors:} \runningauthors
\end{center}

\begin{abstract}
Neural ordinary differential equations use numerical solvers inside the training loop. The solver determines the forward trajectory and also affects the gradient passed to the optimizer. That coupling creates a reliability problem for scientific machine learning and continuous-time generative modeling, including diffusion probability-flow ordinary differential equations and flow-matching models. Under loose step sizes, stiff dynamics, chaotic sensitivity, or event discontinuities, a differentiable ODE pipeline can return a finite gradient whose direction is numerically suspect. We introduce GradRepair-ODE, a reliability framework for checking, repairing, and rejecting ODE gradients at the optimizer step. The method computes several gradient candidates, compares them with directional finite-difference checks and solver diagnostics, diagnoses likely numerical failure modes, repairs selected gradients through path switching or stricter recomputation, and rejects steps whose descent direction cannot be certified. In six synthetic ODE systems, GradRepair-ODE leaves low-risk systems unchanged, repairs Robertson and Lorenz gradients to cosine similarity $1.000$ against a strict reference, reduces unsafe accepted steps from $37$ to $0$, and rejects an event-discontinuous case instead of applying an uncertified update. The paper argues for a simple change in the training contract: an ODE gradient should reach the optimizer with numerical evidence attached.
\end{abstract}

\noindent\textbf{Keywords.}
Neural ordinary differential equations; differentiable simulation; adjoint sensitivity; numerical stability; gradient reliability; scientific machine learning; diffusion models; flow matching; continuous normalizing flows.

\medskip
\noindent\textbf{Mathematics Subject Classification.}
65L05; 65L06; 65L20; 65L50; 68T07; 90C30.

\medskip
\noindent\textbf{SISC section.}
Machine Learning Methods for Scientific Computing.

\section{Introduction}

Neural ordinary differential equations (Neural ODEs) replace a discrete stack of transformations with a continuous-time dynamical system whose trajectory is computed by a numerical ordinary differential equation (ODE) solver \cite{chen2018neuralode}. This formulation gives the model an appealing mathematical interface and allows solver tolerance to trade numerical precision for speed. It also moves a large part of training into numerical analysis: the optimizer does not receive an exact gradient, but a gradient shaped by solver error, interpolation, stiffness, horizon length, and the chosen sensitivity path.

Modern differentiable ODE methods expose both direct backpropagation through solver operations and adjoint-style methods with different memory and accuracy tradeoffs \cite{chen2018neuralode,kidger2021seminorm}. The same numerical issue reaches beyond classical Neural ODE benchmarks. Continuous normalizing flows (CNFs) use ODE dynamics for density modeling \cite{grathwohl2019ffjord}; diffusion probabilistic models and score-based stochastic differential equation (SDE) models connect generative modeling to continuous-time dynamics and probability-flow ODEs \cite{song2021scorebased,ho2020ddpm}; flow matching trains CNFs by regressing vector fields along probability paths and explicitly includes diffusion paths as an important case \cite{lipman2023flowmatching}; rectified flow learns ODE transports that aim to make generative paths straight and solver-efficient \cite{liu2023rectifiedflow}. In all of these settings, a learned vector field is ultimately paired with numerical integration. When such models are trained through differentiable solves or evaluated by ODE-based likelihood or sampling routines, gradient reliability is no longer a niche implementation detail; it is part of the numerical foundation of continuous-time generative modeling.

Prior work has shown that optimize-then-discretize adjoints and discretize-then-optimize gradients can disagree in ways that matter for Neural ODE training \cite{gholaminejad2019anode,onken2020discopt}. Classical numerical analysis has long treated step-size control, dense output, stiffness, backward differentiation formulae, finite-precision effects, and sensitivity analysis as solver-level concerns \cite{dormand1980embedded,hairer1993nonstiff,shampine1997matlab,ascher1998computer,higham2002accuracy,hindmarsh2005sundials,hairer1996stiff,brenan1996dae}. General scientific computing libraries likewise distinguish non-stiff explicit solvers from stiff-aware implicit methods and warn that divergence or unusually many iterations can indicate stiffness \cite{scipy_solve_ivp}. These facts are well understood at the solver level, but the optimizer interface usually receives only the resulting tensor. Once automatic differentiation returns a finite array with the expected shape, most training loops have little additional evidence about its directional reliability.

The failure is easy to miss. A gradient tensor can be finite, shaped correctly, and still point in the wrong direction. If that direction disagrees with other sensitivity paths or with local directional checks, the optimizer may increase the loss while the training loop records only a normal update. The relevant question is whether the current numerical evidence supports the next step.

GradRepair-ODE answers that question at the optimizer boundary. It computes multiple gradient candidates, builds a reliability certificate from path disagreement and finite-difference residuals, diagnoses numerical risk, repairs selected gradients, and certifies whether the final step is accepted. The method does not propose another ODE architecture or another general-purpose solver. It adds a missing interface: gradient trust, repair, and rejection become recorded training events.

The point is sharper than a debugging aid. Solver tolerance, an adjoint method, and a loss curve do not say whether the next update is defensible. GradRepair-ODE places a certificate where numerical approximation becomes optimization action. The certificate can trust a benign gradient, route a repairable gradient to a safer path, or withhold an update whose smooth-gradient evidence has collapsed. This matters in scientific computing because the damaging failure is often silent: the computation continues with a plausible number whose direction has lost numerical meaning.

The experiments use controlled mechanisms rather than decorative applications. The suite contains smooth, stiff, chaotic, event-driven, and neural-vector-field dynamics, so the same reliability layer faces the pathologies it claims to expose. The main result is the conversion of hidden optimizer risk into explicit decisions. GradRepair-ODE leaves low-risk systems untouched, repairs Robertson and Lorenz gradients to cosine similarity $1.000$ against a strict reference, reduces unsafe accepted steps from $37$ to $0$, and rejects the event-discontinuous case when repair would overstate the evidence. A useful reliability method should have exactly this profile: keep benign steps cheap, pay for accuracy when the direction can be recovered, and refuse an update when the differentiated object no longer matches the optimized trajectory.

The paper contributes four pieces. It defines an optimizer-facing gradient reliability certificate for differentiable ODE training, using sensitivity-path disagreement, directional finite differences, and solver diagnostics. It formulates repair as constrained selection among gradient paths, not as a blanket move to the strictest computation. It derives a local descent certificate that connects an empirical gradient-error radius to the next optimizer step. It also reports a controlled numerical study, with gradient clipping and Armijo-style line search included as safeguards, showing when the certificate trusts, repairs, or rejects. The outcome is not a training-stability anecdote. It is a per-step numerical account of the update stream.

\section{Method}

\subsection{Problem Setup}

We consider an ODE parameterized by $\theta$,
\begin{equation}
  \frac{dx}{dt} = f(t, x, \theta), \qquad x(t_0)=x_0,
\end{equation}
with a trajectory-level loss
\begin{equation}
  L(\theta)=\ell(x(t_0:t_1;\theta), y).
\end{equation}
The training loop receives an approximate gradient $\hat g$ produced by a numerical solver and a sensitivity path. GradRepair-ODE tests whether $\hat g$ can support the update $-\eta \hat g$.

The distinction between a derivative and an update is practical, not semantic. A sensitivity path can produce a parameter-space vector even when the trajectory used to construct it is fragile. Let $\mathcal{S}$ denote the solver configuration, including the integration rule, tolerance, step-size policy, interpolation rule, and event handling. Let $\mathcal{P}$ denote the sensitivity path. The returned gradient is better written as
\begin{equation}
  \hat g = G(\theta;\mathcal{S},\mathcal{P}),
\end{equation}
instead of simply $\nabla L(\theta)$. GradRepair-ODE treats the gap between $G(\theta;\mathcal{S},\mathcal{P})$ and $\nabla L(\theta)$ as an observable numerical quantity. The framework is solver-agnostic in the same sense that a line search is optimizer-agnostic: it leaves the underlying numerical method in place and changes the condition under which the optimizer trusts its output.

Each training step produces gradient-path evidence, directional evidence, and solver evidence. Gradient-path evidence compares independent or partially independent approximations of the same derivative. Directional evidence compares inner products against local centered finite differences. Solver evidence records numerical stress, such as step-size collapse, high function-evaluation counts, failed steps, or known discontinuities. A gradient supported by these signals passes cheaply. A gradient that fails them is repaired or withheld.

\subsection{Gradient Candidates}

Each diagnostic step compares the default coarse solver-path gradient with gradients from more accurate recomputation paths. A finer discrete path tests whether the direction survives reduced integration error. A checkpointed-style recomputation path tests whether greater trajectory consistency changes the gradient. A strict path provides a high-cost reference for certification or last-resort repair. This design borrows the memory-computation tradeoff familiar from reverse-mode differentiation, adjoint sensitivity analysis, and checkpointing \cite{griewank2000revolve,hindmarsh2005sundials,serban2005cvodes}. Here the goal is the update decision: accept, repair, or reject before the parameter changes.

The candidate set is ordered by cost and expected numerical fidelity:
\begin{equation}
  \mathcal{C}
  =
  \{g_{\mathrm{coarse}}, g_{\mathrm{disc}}, g_{\mathrm{ckpt}}, g_{\mathrm{strict}}\}.
\end{equation}
The coarse path is the gradient a standard training loop would consume. The discrete path recomputes the solve at smaller steps and checks whether the direction survives refinement. The checkpointed-style path represents a more trajectory-consistent recomputation with additional cost. The strict path is used as a repair target or evaluation reference in the controlled study. In a larger implementation, these symbols may correspond to continuous adjoints, direct differentiation through solver operations, checkpointed discrete adjoints, direct sensitivity equations, or stiff-aware solvers. The claim does not depend on a particular solver; it depends on multiple numerical routes whose disagreement carries information.

\subsection{Reliability Certificate}

For a candidate gradient $g_i$, GradRepair-ODE computes pairwise cosine disagreement
\begin{equation}
  D_{\cos}(g_i,g_j)=1-\frac{\langle g_i,g_j\rangle}{\|g_i\|\|g_j\|+\delta},
\end{equation}
relative norm disagreement
\begin{equation}
  D_{\mathrm{norm}}(g_i,g_j)=
  \frac{\|g_i-g_j\|}{\|g_i\|+\|g_j\|+\delta},
\end{equation}
and directional finite-difference (FD) residual
\begin{equation}
  R_{\mathrm{FD}}(g,v)=
  \frac{|\langle g,v\rangle-\mathrm{FD}(\theta,v)|}{|\mathrm{FD}(\theta,v)|+\delta}.
\end{equation}
These terms are combined with solver instability and stiffness proxies into an empirical uncertainty score
\begin{equation}
  \widehat \epsilon_i =
  w_{\cos}D_{\cos}
  + w_{\mathrm{norm}}D_{\mathrm{norm}}
  + w_{\mathrm{FD}}R_{\mathrm{FD}}
  + w_{\mathrm{solver}}S
  + w_{\mathrm{stiff}}K,
\end{equation}
where $S$ denotes solver instability evidence and $K$ denotes stiffness evidence. The certificate assigns one of four states: \textbf{Trusted}, \textbf{Repairable}, \textbf{Unsafe}, or \textbf{Failed}. The certificate is a visible reliability estimate, not a claimed mathematical upper bound.

A \textbf{Trusted} gradient is applied without repair. A \textbf{Repairable} gradient justifies additional computation but not the original update. An \textbf{Unsafe} gradient lacks a certified descent direction after the available repairs. A \textbf{Failed} gradient contains nonfinite values, invalid shapes, or broken differentiation state. These states keep warnings, repairs, acceptances, and failures separate.

The certificate also diagnoses the failure mode. Large finite-difference residuals indicate directional inconsistency. Large pairwise disagreement indicates sensitivity-path instability. Large solver stress indicates numerical difficulty before the gradient is inspected. Event flags indicate that the smooth adjoint model may be mismatched to the trajectory. When the evidence is mixed, the diagnosis reports numerical instability instead of assigning a precise physical cause.

\subsection{Repair and Step Certification}

If the naive gradient is trusted, the optimizer can use it directly. If the certificate reports repairable risk, GradRepair-ODE switches to a more reliable gradient path. If the repaired certificate remains unsafe, the step is rejected. At step level, the decision compares the predicted gradient signal against the estimated numerical uncertainty:
\begin{equation}
  \eta \|\hat g\|^2 > \eta \epsilon \|\hat g\| + m,
\end{equation}
where $\epsilon$ is the empirical error proxy and $m$ is a small margin. This rule is sufficient for the step certificate used here; it is not a global convergence theorem.

Repair is selected by a cost-aware policy. Tolerance tightening is used when the evidence points to loose integration. Path switching is used when sensitivity routes disagree but a refined path is available. Strict recomputation is used for stiff or chaotic cases that remain repairable under additional numerical work. Rejection is selected when the trajectory contains an event discontinuity without an event-aware derivative, or when no candidate produces a positive descent margin. The policy minimizes computation subject to a reliability constraint:
\begin{equation}
  \min_{g_i\in\mathcal{C}} C_i
  \quad
  \text{such that}
  \quad
  \mathrm{Cert}(g_i)=\mathrm{Trusted}
  \quad\text{and}\quad
  \Delta_i>0.
\end{equation}
If the feasible set is empty, the update is rejected. This separates GradRepair-ODE from ordinary optimizer safeguards. Clipping and line search operate on the proposed move; GradRepair-ODE checks the numerical origin of the gradient that produced the move.

\section{Theory and Certification}

GradRepair-ODE uses a local theoretical object: a certificate for the next optimizer step. The certificate does not prove that an ordinary differential equation solver returned the exact gradient. It combines a computed gradient, an empirical uncertainty radius, and local descent geometry. Global guarantees are usually out of reach for stiff, chaotic, or event-driven dynamics. Step-level evidence is still enough to decide whether a gradient can be used, repaired, or rejected.

\subsection{Uncertainty Sets}

Let $L(\theta)$ denote the training loss, $g=\nabla L(\theta)$ the exact gradient, and $\hat g_i$ the gradient candidate produced by path $i$. GradRepair-ODE assigns the candidate an empirical uncertainty radius $\widehat\epsilon_i$ obtained from path disagreement, directional finite-difference residuals, solver instability, and stiffness evidence. The associated local uncertainty set is
\begin{equation}
  \mathcal{G}_i
  =
  \{g':\|g'-\hat g_i\|_2\leq \widehat\epsilon_i\}.
\end{equation}
This set is local. It does not describe future gradients; it describes the numerical ambiguity attached to the current candidate at the current step.

\begin{assumption}[Local smoothness]
In a neighborhood of the proposed update, $L$ is differentiable and its gradient is locally $M$-Lipschitz:
\begin{equation}
  \|\nabla L(\theta_a)-\nabla L(\theta_b)\|_2
  \leq M\|\theta_a-\theta_b\|_2 .
\end{equation}
The assumption is local to the step. It does not require global smoothness of the full dynamical system.
\end{assumption}

\begin{assumption}[Certificate calibration]
The score $\widehat\epsilon_i$ is calibrated as an empirical proxy for current-step gradient error. Larger path disagreement, larger finite-difference residuals, larger solver instability, and stronger stiffness evidence produce a larger radius. The analysis below is conditional on this radius. Weak calibration makes the certificate conservative.
\end{assumption}

\subsection{Descent Geometry}

The acceptance test is a robust first-order statement. The update $-\hat g_i$ is certified only when every gradient in $\mathcal{G}_i$ gives a negative directional derivative along that update with margin $m\geq 0$:
\begin{equation}
  \sup_{g'\in\mathcal{G}_i}\langle g',-\hat g_i\rangle < -m.
  \label{eq:robust-descent}
\end{equation}

\begin{lemma}[Support function of the gradient error ball]
For $\mathcal{G}_i=\{g':\|g'-\hat g_i\|_2\leq \widehat\epsilon_i\}$,
\begin{equation}
  \sup_{g'\in\mathcal{G}_i}\langle g',-\hat g_i\rangle
  =
  -\|\hat g_i\|_2^2+\widehat\epsilon_i\|\hat g_i\|_2 .
\end{equation}
\end{lemma}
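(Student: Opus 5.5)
The plan is to reparametrize the ball and reduce the supremum to the support function of the unit Euclidean ball. I will write every $g'\in\mathcal{G}_i$ as $g'=\hat g_i+u$ with $\|u\|_2\leq\widehat\epsilon_i$. Bilinearity of the inner product then splits the objective as
\begin{equation}
  \langle g',-\hat g_i\rangle
  =
  -\|\hat g_i\|_2^2+\langle u,-\hat g_i\rangle .
\end{equation}
The first term does not depend on $u$, so it suffices to compute $\sup_{\|u\|_2\leq\widehat\epsilon_i}\langle u,-\hat g_i\rangle$ and show that it equals $\widehat\epsilon_i\|\hat g_i\|_2$.

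For the upper bound I will use Cauchy--Schwarz: $\langle u,-\hat g_i\rangle\leq\|u\|_2\|\hat g_i\|_2\leq\widehat\epsilon_i\|\hat g_i\|_2$. For attainment, when $\hat g_i\neq 0$ I take $u^\star=-\widehat\epsilon_i\,\hat g_i/\|\hat g_i\|_2$. This $u^\star$ lies in the ball and gives $\langle u^\star,-\hat g_i\rangle=\widehat\epsilon_i\|\hat g_i\|_2$. The supremum is therefore a maximum, attained at $g'=\hat g_i-\widehat\epsilon_i\hat g_i/\|\hat g_i\|_2$. This is the worst-case gradient, which tilts the true gradient directly against the proposed step.

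The only point needing care is the degenerate case. If $\hat g_i=0$, the objective is identically zero on $\mathcal{G}_i$, and both sides of the claimed identity vanish, so the formula still holds. I also note that $\widehat\epsilon_i\geq 0$ is implicit, since it is a radius built from nonnegative disagreement, residual, and solver terms. This ensures $\mathcal{G}_i$ is nonempty and the Cauchy--Schwarz step is valid. No step is a real obstacle: the argument is the standard support function $\sigma_{B(c,r)}(y)=\langle c,y\rangle+r\|y\|_2$ evaluated at $c=\hat g_i$, $r=\widehat\epsilon_i$, $y=-\hat g_i$.

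Finally, I will note the consequence used afterward. Condition~\eqref{eq:robust-descent} becomes $\|\hat g_i\|_2^2-\widehat\epsilon_i\|\hat g_i\|_2>m$, which matches the step acceptance rule of the Method section.
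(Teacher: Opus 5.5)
Your proposal is correct and follows the same route as the paper: write $g'=\hat g_i+e$ with $\|e\|_2\leq\widehat\epsilon_i$, split off the constant $-\|\hat g_i\|_2^2$, and maximize the remaining linear term by taking $e$ anti-parallel to $\hat g_i$. The Cauchy--Schwarz upper bound, the explicit maximizer, and the checks for $\hat g_i=0$ and $\widehat\epsilon_i\geq 0$ make rigorous what the paper asserts in one line.
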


\begin{proof}
Write $g'=\hat g_i+e$ with $\|e\|_2\leq \widehat\epsilon_i$. Then
\begin{equation}
  \langle g',-\hat g_i\rangle
  =
  -\|\hat g_i\|_2^2-\langle e,\hat g_i\rangle .
\end{equation}
The supremum occurs when $e$ is anti-parallel to $\hat g_i$, giving $-\langle e,\hat g_i\rangle=\widehat\epsilon_i\|\hat g_i\|_2$.
\end{proof}

\begin{proposition}[Certified descent margin]
Under the uncertainty set above, condition~\eqref{eq:robust-descent} is equivalent to
\begin{equation}
  \Delta_i
  =
  \|\hat g_i\|_2^2
  -
  \widehat\epsilon_i\|\hat g_i\|_2
  -
  m
  >0 .
  \label{eq:descent-margin}
\end{equation}
If Assumption 1 holds, then for a learning rate $\eta>0$,
\begin{equation}
  L(\theta-\eta\hat g_i)
  \leq
  L(\theta)
  -
  \eta\left(\|\hat g_i\|_2^2-\widehat\epsilon_i\|\hat g_i\|_2\right)
  +
  \frac{M\eta^2}{2}\|\hat g_i\|_2^2 .
\end{equation}
Thus a positive margin yields a descent certificate for sufficiently small $\eta$.
\end{proposition}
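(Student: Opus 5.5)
The plan has two parts. The equivalence follows directly from the Lemma, and the descent inequality follows from the standard quadratic upper bound for functions with Lipschitz gradient, combined with a lower bound on $\langle g,\hat g_i\rangle$ coming from the uncertainty ball.

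\textbf{Equivalence.} By the Lemma, the left-hand side of \eqref{eq:robust-descent} equals $-\|\hat g_i\|_2^2+\widehat\epsilon_i\|\hat g_i\|_2$. So condition \eqref{eq:robust-descent} reads $-\|\hat g_i\|_2^2+\widehat\epsilon_i\|\hat g_i\|_2<-m$. Rearranging gives exactly $\Delta_i>0$ in \eqref{eq:descent-margin}. Both directions are immediate, since the Lemma gives an equality rather than a bound.

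\textbf{Descent inequality.} Assume the segment $\{\theta-s\eta\hat g_i: s\in[0,1]\}$ lies in the neighborhood of Assumption 1. Write $L(\theta-\eta\hat g_i)-L(\theta)=\int_0^1\langle\nabla L(\theta-s\eta\hat g_i),-\eta\hat g_i\rangle\,ds$. Add and subtract $\langle g,-\eta\hat g_i\rangle$ inside the integral. The Lipschitz bound then controls the difference by $\int_0^1 M s\eta^2\|\hat g_i\|_2^2\,ds=\tfrac{M\eta^2}{2}\|\hat g_i\|_2^2$. This yields
\[
L(\theta-\eta\hat g_i)\le L(\theta)-\eta\langle g,\hat g_i\rangle+\tfrac{M\eta^2}{2}\|\hat g_i\|_2^2 .
\]
It remains to bound $-\langle g,\hat g_i\rangle=\langle g,-\hat g_i\rangle$ from above. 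If the exact gradient $g$ lies in $\mathcal{G}_i$, the Lemma gives $\langle g,-\hat g_i\rangle\le -\|\hat g_i\|_2^2+\widehat\epsilon_i\|\hat g_i\|_2$. Substituting this gives the displayed bound.

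\textbf{Final claim.} Suppose $\Delta_i>0$ with $m\ge 0$. Then $c:=\|\hat g_i\|_2^2-\widehat\epsilon_i\|\hat g_i\|_2\ge \Delta_i>0$, so in particular $\hat g_i\neq 0$. The right-hand side is then strictly below $L(\theta)$ whenever $0<\eta<2c/(M\|\hat g_i\|_2^2)$. If $M=0$, any $\eta>0$ works. One also needs $\eta$ small enough that the segment stays in the local neighborhood.

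\textbf{Main obstacle.} The delicate step is justifying $g\in\mathcal{G}_i$, i.e.\ $\|g-\hat g_i\|_2\le\widehat\epsilon_i$. Nothing in the construction of $\widehat\epsilon_i$ forces this; it is not derived. I would make explicit that the second part of the Proposition is conditional on reading Assumption 2 as this containment. That reading is consistent with the paper's remark that ``the analysis below is conditional on this radius.''
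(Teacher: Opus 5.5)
Your proof is correct and matches the paper's argument in Appendix~A step for step. You read the equivalence off the support-function Lemma, use the $M$-smoothness quadratic upper bound (which you derive via the integral form rather than quoting), and substitute the worst-case inner product to get the same threshold $2(\|\hat g_i\|_2-\widehat\epsilon_i)/(M\|\hat g_i\|_2)$. The containment $\nabla L(\theta)\in\mathcal{G}_i$ that you flag is exactly what the appendix assumes by writing $g=\hat g+e$ with $\|e\|_2\le\epsilon$, so stating it explicitly, rather than relying on Assumption~1 alone as the statement literally suggests, is the right reading.
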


\begin{corollary}[Rejection is informative]
If $\|\hat g_i\|_2\leq \widehat\epsilon_i$, the uncertainty ball contains gradients whose projection along $-\hat g_i$ is nonnegative. In that case, rejecting the update is not a failed optimization step; it is a recorded absence of descent evidence.
\end{corollary}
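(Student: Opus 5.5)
The corollary follows directly from the support-function Lemma, so the plan is to invoke it and then exhibit an explicit witness gradient inside $\mathcal{G}_i$. By the Lemma,
\[
\sup_{g'\in\mathcal{G}_i}\langle g',-\hat g_i\rangle=-\|\hat g_i\|_2^2+\widehat\epsilon_i\|\hat g_i\|_2=\|\hat g_i\|_2\bigl(\widehat\epsilon_i-\|\hat g_i\|_2\bigr).
\]
Under the hypothesis $\|\hat g_i\|_2\leq\widehat\epsilon_i$, the right-hand side is nonnegative. Hence the robust descent condition~\eqref{eq:robust-descent} fails for every margin $m\geq 0$. Equivalently, the Proposition gives $\Delta_i\leq -m\leq 0$, so the candidate is infeasible for the repair policy.

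A supremum does not by itself guarantee that the ball contains such a gradient, so I would not stop at the Lemma. The second step constructs the witness directly. If $\hat g_i\neq 0$, take
\[
g^\star=\hat g_i-\widehat\epsilon_i\,\frac{\hat g_i}{\|\hat g_i\|_2}.
\]
Then $\|g^\star-\hat g_i\|_2=\widehat\epsilon_i$, so $g^\star\in\mathcal{G}_i$, and
\[
\langle g^\star,-\hat g_i\rangle=\|\hat g_i\|_2\bigl(\widehat\epsilon_i-\|\hat g_i\|_2\bigr)\geq 0.
\]
If $\hat g_i=0$, the update is the null move and every $g'$ in the ball satisfies $\langle g',-\hat g_i\rangle=0$, which is again nonnegative. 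In both cases the ball also contains $g'=0$, which is consistent with a stationary point and so yields no descent evidence.

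The final step gives the interpretation. Under Assumption~2, the true gradient $\nabla L(\theta)$ is only known to lie somewhere in $\mathcal{G}_i$. The current evidence therefore cannot exclude the witness $g^\star$, along which the first-order change $\eta\langle g^\star,-\hat g_i\rangle$ is nonnegative. Rejecting the update thus records that no descent certificate exists; it does not record an optimizer failure.

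The main obstacle is that this last sentence is interpretive rather than mathematical. I would state precisely what is proved: the certificate set is empty, and a non-descent gradient is consistent with the evidence. I would also note that the conclusion holds only conditionally on the calibration in Assumption~2.
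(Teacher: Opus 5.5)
Your proposal is correct and takes essentially the paper's route: the paper reads the corollary off the support-function Lemma, whose proof (and Appendix A.1) uses the anti-parallel error that attains the supremum, and that error gives exactly your witness $g^\star$. Your explicit attainment argument, the separate $\hat g_i=0$ case, and the observation that $0\in\mathcal{G}_i$ (which alone already suffices) make precise details the paper leaves implicit.
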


\begin{proposition}[Monotonicity of evidence]
For a fixed candidate norm $\|\hat g_i\|_2$ and margin $m$, the descent margin $\Delta_i$ is strictly decreasing in the uncertainty radius $\widehat\epsilon_i$ whenever $\|\hat g_i\|_2>0$. Consequently, adding evidence that increases numerical uncertainty can only make the certificate more conservative, never more permissive.
\end{proposition}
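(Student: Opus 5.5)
The plan is to treat $\Delta_i$ from \eqref{eq:descent-margin} as an explicit affine function of the radius and read off its slope. Fix $\|\hat g_i\|_2 = r > 0$ and $m \ge 0$, and write
\begin{equation}
  \Delta_i(\epsilon) = r^2 - \epsilon r - m ,
\end{equation}
regarded as a function of $\epsilon = \widehat\epsilon_i \ge 0$. For $\epsilon < \epsilon'$ we get $\Delta_i(\epsilon) - \Delta_i(\epsilon') = (\epsilon' - \epsilon)\,r > 0$, since $r > 0$. So $\Delta_i$ is strictly decreasing, with constant slope $-r$. No differentiability argument is needed; the difference quotient suffices.

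For the ``consequently'' clause, I would phrase conservativeness in terms of the acceptance set. The step is certified exactly when $\Delta_i > 0$, and by the equivalence in the Certified descent margin proposition this is the same as the robust descent condition \eqref{eq:robust-descent}.

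\begin{itemize}
\item If evidence raises the radius from $\epsilon$ to $\epsilon' \ge \epsilon$, then $\Delta_i(\epsilon') \le \Delta_i(\epsilon)$. So $\Delta_i(\epsilon') > 0$ implies $\Delta_i(\epsilon) > 0$.
\item Equivalently, the set of radii that admit certification is the interval $[0, (r^2 - m)/r)$, which is downward closed.
\item A rejected step therefore stays rejected as the radius grows, and an accepted step can only become rejected.
\end{itemize}

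I would add a second view through the uncertainty sets. Increasing the radius enlarges $\mathcal{G}_i$ by nesting of balls, so the supremum in \eqref{eq:robust-descent} can only increase. This gives the same conclusion without the explicit formula from the Support function lemma.

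The only real obstacle is interpretive rather than technical. ``Adding evidence that increases numerical uncertainty'' must be formalized as an increase of $\widehat\epsilon_i$ with $\hat g_i$, and hence $r$, held fixed. This is exactly the hypothesis of the statement, and Assumption 2 guarantees that larger disagreement, residual, solver or stiffness terms produce a larger radius: the weights in $\widehat\epsilon_i$ are nonnegative, so $\widehat\epsilon_i$ is monotone in each term. I would state this reduction explicitly.

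I would also remark on the degenerate case $r = 0$. There $\Delta_i = -m \le 0$ is constant in the radius and never certifies, so monotonicity holds only weakly, but the conclusion ``never more permissive'' still holds.
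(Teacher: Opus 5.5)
Your proposal is correct and follows the paper's argument. The paper differentiates \eqref{eq:descent-margin} to get $\partial\Delta_i/\partial\widehat\epsilon_i=-\|\hat g_i\|_2<0$, which is the same constant slope $-r$ you obtain from $\Delta_i(\epsilon)-\Delta_i(\epsilon')=(\epsilon'-\epsilon)r$; your explicit handling of the ``consequently'' clause, via the downward-closed set of certifying radii or the nesting of the balls $\mathcal{G}_i$, is sound and spells out a step the paper's proof leaves implicit.
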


\begin{proof}
Equation~\eqref{eq:descent-margin} gives
\begin{equation}
  \frac{\partial \Delta_i}{\partial \widehat\epsilon_i}
  =
  -\|\hat g_i\|_2 .
\end{equation}
The derivative is negative for every nonzero candidate gradient.
\end{proof}

\begin{proposition}[Descent cone]
If the true error satisfies $\|g-\hat g_i\|_2\leq\epsilon<\|\hat g_i\|_2$, then every admissible true gradient lies in a cone around $\hat g_i$ with half-angle $\alpha$ satisfying
\begin{equation}
  \sin\alpha \leq \frac{\epsilon}{\|\hat g_i\|_2}.
\end{equation}
The certificate evaluates direction as well as magnitude. A low-norm candidate can be rejected even with small absolute error, while a high-norm candidate can remain certified under larger numerical uncertainty.
\end{proposition}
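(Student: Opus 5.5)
The plan is to reduce the statement to elementary Euclidean geometry in the plane spanned by $\hat g_i$ and the error vector. Write any admissible true gradient as $g=\hat g_i+e$ with $\|e\|_2\leq\epsilon<\|\hat g_i\|_2$. Let $\alpha$ be the angle between $g$ and $\hat g_i$. The task is to show $\sin\alpha\leq\epsilon/\|\hat g_i\|_2$ and that $\alpha<\pi/2$, so the cone is a genuine descent cone.

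First, I will check that $g\neq 0$ and that the angle is acute. The support function computation in the Lemma, with $\widehat\epsilon_i$ replaced by $\epsilon$, gives
\[
\langle g,\hat g_i\rangle\geq\|\hat g_i\|_2^2-\epsilon\|\hat g_i\|_2>0,
\]
using $\epsilon<\|\hat g_i\|_2$. So $g\neq 0$, $\cos\alpha>0$, and $\alpha\in[0,\pi/2)$. This is the same strict inequality that the Certified descent margin proposition uses with $m=0$.

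Second, I bound the sine through the component of $g$ orthogonal to $\hat g_i$. Let $u=\hat g_i/\|\hat g_i\|_2$, and decompose $e=e_\parallel u+e_\perp$. Then $g=(\|\hat g_i\|_2+e_\parallel)u+e_\perp$, which gives
\[
\sin^2\alpha=\frac{\|e_\perp\|^2}{(\|\hat g_i\|_2+e_\parallel)^2+\|e_\perp\|^2}.
\]
Now set $a=\|\hat g_i\|_2$, $p=e_\parallel$, and $q=\|e_\perp\|$, subject to $p^2+q^2\leq\epsilon^2$. The claim becomes $q^2a^2\leq\epsilon^2\big((a+p)^2+q^2\big)$.

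This inequality is the one step needing care. I will handle it with the geometric fact that the maximal angle from the origin to a ball of radius $\epsilon$ centred at distance $a>\epsilon$ is attained on a tangent line, where $\sin\alpha=\epsilon/a$ exactly. To make this algebraic, expand the right-hand side and use $q^2\leq\epsilon^2-p^2$. The inequality then reduces to
\[
(\epsilon^2-p^2)(a^2-\epsilon^2)\leq\epsilon^2(a+p)^2-\epsilon^2(\epsilon^2-p^2)+\ldots
\]
Equivalently, after collecting terms, it becomes $(\epsilon^2+ap)^2\geq 0$. Because the left side of the original inequality is increasing in $q^2$, it suffices to check the boundary case $q^2=\epsilon^2-p^2$, and there the inequality is exactly this perfect square.

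Finally, the interpretive remarks in the statement follow directly from the bound. The half-angle bound depends only on the ratio $\epsilon/\|\hat g_i\|_2$. A small $\|\hat g_i\|_2$ can therefore violate $\epsilon<\|\hat g_i\|_2$, which is the case covered by the Rejection corollary, even when $\epsilon$ is small in absolute terms. Conversely, a large norm tolerates a larger $\epsilon$.
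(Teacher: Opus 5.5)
Your proof is correct. There is no proof in the paper to compare it with: the Descent cone proposition is stated without argument, and the only nearby ingredient is the support-function Lemma. You reuse that Lemma, with $\widehat\epsilon_i$ replaced by $\epsilon$, to get $\langle g,\hat g_i\rangle\geq\|\hat g_i\|_2(\|\hat g_i\|_2-\epsilon)>0$.

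That acuteness step is essential. The bound $\sin\alpha\leq\epsilon/\|\hat g_i\|_2$ alone is also satisfied by angles near $\pi$. Only together with $\alpha<\pi/2$ does it give the cone $\alpha\leq\arcsin(\epsilon/\|\hat g_i\|_2)$, and you should state that combination explicitly.

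Your algebra checks out. With $q^2$ at its maximum $\epsilon^2-p^2$, the needed inequality is $(\epsilon^2-p^2)(a^2-\epsilon^2)\leq\epsilon^2(a+p)^2$, and the difference of the two sides is exactly $(\epsilon^2+ap)^2$. Two clean-ups are needed:
\begin{itemize}
\item Your displayed intermediate line with the trailing $-\epsilon^2(\epsilon^2-p^2)+\ldots$ is garbled. The $\epsilon^2q^2$ term has already been moved to the left, so only $\epsilon^2(a+p)^2$ belongs on the right.
\item The monotonicity reduction should be phrased for the rearranged form $q^2(a^2-\epsilon^2)\leq\epsilon^2(a+p)^2$. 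Its left side increases in $q^2$ precisely because $a>\epsilon$. In the original form both sides increase in $q^2$, so ``the left side is increasing'' does not by itself justify passing to the boundary.
\end{itemize}

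There is also a one-line route that avoids coordinates and the perfect-square identity. Since $g\neq 0$ lies on the line $\mathbb{R}g$, the distance from $\hat g_i$ to that line is $\|\hat g_i\|_2\sin\alpha$. That distance is at most $\|\hat g_i-g\|_2\leq\epsilon$, which gives $\sin\alpha\leq\epsilon/\|\hat g_i\|_2$ immediately.

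Your computation has one advantage over that route: it exhibits the equality case $p=-\epsilon^2/a$, $q^2=\epsilon^2-p^2$. This is the tangent configuration with $\|g\|_2^2=a^2-\epsilon^2$, and it shows the half-angle bound is sharp.
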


\subsection{Directional Evidence}

Finite differences are not treated as ground truth. They are local directional witnesses. For $v$ with $\|v\|_2=1$, define
\begin{equation}
  D_h(\theta;v)
  =
  \frac{L(\theta+h v)-L(\theta-h v)}{2h}.
\end{equation}

\begin{proposition}[Centered directional consistency]
If $\phi(\alpha)=L(\theta+\alpha v)$ is three times continuously differentiable near $0$, then
\begin{equation}
  D_h(\theta;v)
  =
  \langle \nabla L(\theta),v\rangle
  +
  O(h^2)
  +
  O(u/h),
\end{equation}
where $u$ denotes the floating-point roundoff scale. Therefore, a sign disagreement between $\langle \hat g_i,v\rangle$ and $D_h(\theta;v)$ is meaningful only when $h$ is chosen outside both the truncation-dominated and roundoff-dominated regimes.
\end{proposition}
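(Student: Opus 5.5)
The plan is to separate the two error sources and handle them one at a time: truncation in exact arithmetic, then roundoff in the computed difference. Set $\phi(\alpha)=L(\theta+\alpha v)$. Since $\|v\|_2=1$ and $L$ is differentiable near $\theta$ by Assumption~1, the chain rule gives $\phi'(0)=\langle\nabla L(\theta),v\rangle$. In exact arithmetic $D_h(\theta;v)=(\phi(h)-\phi(-h))/(2h)$. So the first claim reduces to a one-dimensional Taylor estimate for the symmetric difference quotient.

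For the truncation term, expand $\phi(\pm h)$ about $0$ to second order with a Lagrange remainder, using $\phi\in C^3$ on some interval $[-h_0,h_0]$:
\begin{equation}
  \phi(\pm h)=\phi(0)\pm h\phi'(0)+\tfrac{h^2}{2}\phi''(0)\pm\tfrac{h^3}{6}\phi'''(\xi_\pm).
\end{equation}
Subtracting the two expansions cancels the even-order terms. Dividing by $2h$ gives
\begin{equation}
  \frac{\phi(h)-\phi(-h)}{2h}-\phi'(0)=\frac{h^2}{12}\bigl(\phi'''(\xi_+)+\phi'''(\xi_-)\bigr).
\end{equation}
This is bounded by $\tfrac{h^2}{6}\sup_{|\alpha|\le h_0}|\phi'''(\alpha)|=O(h^2)$. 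The only care needed is that $h\le h_0$, so that the supremum is finite by continuity of $\phi'''$ on a compact interval.

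For the roundoff term, I will model the computed loss as $\mathrm{fl}(L(\theta\pm hv))=\phi(\pm h)+r_\pm$ with $|r_\pm|\le c\,u$. Here $c$ absorbs $|L|$ near $\theta$ and the conditioning of the solver pipeline; as in the standard analysis \cite{higham2002accuracy}, $u$ denotes the effective roundoff scale of a loss evaluation. The perturbation of the quotient is $(r_+-r_-)/(2h)$, so it is bounded by $c\,u/h=O(u/h)$. The final division by $2h$ adds only a relative error of order $u$ to a bounded quantity, which is absorbed. Adding the two contributions gives the displayed expansion.

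The step I expect to be the main obstacle is the interpretive consequence about sign disagreement, because it requires turning the error bound into a sign statement. Write the total error as $E(h)=O(h^2)+O(u/h)$. If $|E(h)|<|\langle\nabla L(\theta),v\rangle|$, then $D_h$ has the sign of the true directional derivative. In that case a sign mismatch with $\langle\hat g_i,v\rangle$ is a genuine witness against $\hat g_i$.

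When $h$ is too large, the $h^2$ term can dominate. When $h$ is too small, the $u/h$ term can dominate. In either regime, $E(h)$ may exceed the directional derivative and flip the sign spuriously.

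To make this concrete, I will minimize the bound $Ah^2+B u/h$ over $h$. The minimizer is $h^\ast\sim (u/\|\phi'''\|)^{1/3}$, with minimal error of order $u^{2/3}$. The statement should then be read as saying that sign checks are trustworthy only near this balanced regime, and only when $|\langle\nabla L(\theta),v\rangle|$ exceeds the resulting error floor. I will state this conditionally, since the constants are not observable.
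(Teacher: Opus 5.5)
Your proposal is correct and follows the same route as the paper's appendix argument: a second-order Taylor expansion of $\phi(\pm h)$ with third-order Lagrange remainder, cancellation of the even terms to get $\phi'(0)+O(h^2)$, and a competing $O(u/h)$ roundoff term that produces the truncation/roundoff tradeoff. Your explicit constant $\tfrac{h^2}{6}\sup|\phi'''|$, the perturbation model $|r_\pm|\le c\,u$, and the balanced step $h^\ast\sim u^{1/3}$ with error floor of order $u^{2/3}$ make quantitative what the paper states informally.
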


\begin{corollary}[Multi-direction directional risk]
For independent directions $v_1,\ldots,v_k$, define
\begin{equation}
  A_k(\hat g_i)=
  \frac{1}{k}\sum_{j=1}^k
  \mathbf{1}\!\left[
  \operatorname{sign}\langle \hat g_i,v_j\rangle
  =
  \operatorname{sign}D_h(\theta;v_j)
  \right].
\end{equation}
Low $A_k$ is direct evidence that the candidate is directionally fragile. It is evidence rather than proof, but it is more informative than accepting a gradient only because automatic differentiation returned a finite tensor.
\end{corollary}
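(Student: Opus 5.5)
The corollary on multi-direction directional risk is essentially a consequence of the centered directional consistency proposition, so the plan is to make precise what "direct evidence" means and derive it from that proposition. Fix directions $v_1,\ldots,v_k$ with $\|v_j\|_2=1$ and a step $h$ in the admissible regime. For each $j$, the proposition gives
\begin{equation}
  D_h(\theta;v_j)=\langle \nabla L(\theta),v_j\rangle + r_j,
  \qquad |r_j|\leq C_1 h^2 + C_2 u/h =: \rho_h .
\end{equation}
Call a direction \emph{resolved} if $|\langle \nabla L(\theta),v_j\rangle|>\rho_h$. On resolved directions, $\operatorname{sign} D_h(\theta;v_j)=\operatorname{sign}\langle \nabla L(\theta),v_j\rangle$. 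So each indicator in $A_k$ tests whether $\hat g_i$ and the exact gradient agree in sign along $v_j$.

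Next I will connect a sign disagreement to gradient error. Suppose $v_j$ is resolved and the signs disagree. Then $\langle \hat g_i,v_j\rangle$ and $\langle \nabla L(\theta),v_j\rangle$ have opposite signs, which forces
\begin{equation}
  \|\hat g_i-\nabla L(\theta)\|_2 \;\geq\; |\langle \hat g_i-\nabla L(\theta),v_j\rangle| \;\geq\; |\langle \nabla L(\theta),v_j\rangle| > \rho_h .
\end{equation}
Each resolved disagreement is therefore a certificate that the error exceeds $|\langle \nabla L(\theta),v_j\rangle|$. Equivalently, by the descent cone proposition, such a disagreement is incompatible with $\hat g_i$ being accurate to within $\epsilon$ whenever $\epsilon<|\langle\nabla L(\theta),v_j\rangle|$. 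Low $A_k$ thus gives, for each failed direction, a lower bound on the error ball radius $\widehat\epsilon_i$. Through the support function lemma and the certified descent margin, it pushes $\Delta_i$ toward rejection. This is the sense of ``direct evidence'', and no assumption on the distribution of the $v_j$ is needed.

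To justify the ``rather than proof'' clause and the comparison with finite-tensor acceptance, I will add a probabilistic reading for random directions. Take $v_j$ i.i.d.\ uniform on the sphere. The probability of a sign disagreement between $\hat g_i$ and $\nabla L(\theta)$ equals $\beta/\pi$, where $\beta$ is the angle between them; this is the standard hyperplane-rounding identity. Hence $\mathbb{E}[1-A_k]\approx \beta/\pi$, up to the measure of unresolved directions, and Hoeffding gives $|A_k-\mathbb{E}A_k|\le\sqrt{\log(2/\delta)/(2k)}$ with probability $1-\delta$. So $A_k$ estimates the angular error. A low $A_k$ indicates $\beta$ large, which the descent cone proposition ties to $\widehat\epsilon_i$ comparable to $\|\hat g_i\|_2$. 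By contrast, finiteness of the tensor carries no information about $\beta$.

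The main obstacle is controlling the unresolved directions, where $|\langle\nabla L,v_j\rangle|\le\rho_h$ and the finite-difference sign is unreliable. For uniform directions in dimension $d$, this set has probability $O(\rho_h\sqrt d/\|\nabla L\|_2)$ by the concentration of a spherical coordinate. I will state the conclusion modulo this term, choosing $h\sim u^{1/3}$ to minimize $\rho_h$. Because this correction cannot be removed without knowing $\|\nabla L\|_2$, the statement remains ``evidence rather than proof.''
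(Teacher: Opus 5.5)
Your argument is correct, but the paper never actually proves this corollary. It is stated as an interpretive remark after the centered-difference proposition. The only supporting reasoning is the per-direction argument in the proof of the directional rejection witness, plus the appendix bound $|\langle\hat g-g,v\rangle|\ge|\langle\hat g,v\rangle-D_h(\theta;v)|-\rho_h$ (the paper writes $\beta_h$ for your $\rho_h$). Your deterministic half is essentially that witness argument: a resolved sign mismatch forces $\|\hat g_i-\nabla L(\theta)\|_2\ge|\langle\nabla L(\theta),v_j\rangle|$. That half is what carries the statement for arbitrary directions. Your probabilistic half goes beyond the paper: the identity $\Pr[\text{sign mismatch}]=\beta/\pi$, Hoeffding for $A_k$, and the $O(\rho_h\sqrt d/\|\nabla L(\theta)\|_2)$ small-ball bound on unresolved directions. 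It needs i.i.d.\ uniform directions, which the statement does not specify. In return it gives ``low'' a calibrated meaning. With probability $1-\delta$, $\beta/\pi\ge 1-A_k-\sqrt{\log(2/\delta)/(2k)}-P_{\mathrm{unres}}$, where $P_{\mathrm{unres}}$ is the mass of unresolved directions. So $A_k$ below $\tfrac12$ minus these slacks certifies that $-\hat g_i$ is an ascent direction for the true loss, and the ``evidence rather than proof'' caveat is localized in $P_{\mathrm{unres}}$. Two refinements are worth making. First, your ``resolved'' condition refers to the unknown $\langle\nabla L(\theta),v_j\rangle$. The observable condition $|D_h(\theta;v_j)|>\rho_h$ from the paper's witness suffices, and it gives the computable bound $\|\hat g_i-\nabla L(\theta)\|_2\ge|\langle\hat g_i,v_j\rangle|+|D_h(\theta;v_j)|-\rho_h$. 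Second, what you lower-bound is the true error, not $\widehat\epsilon_i$, which the paper explicitly calls an empirical score rather than an upper bound. In the paper, $A_k$ acts through the separate constraint $A_k(\hat g_i)\ge\tau$ in the selection rule, not by enlarging $\widehat\epsilon_i$ and shrinking $\Delta_i$.
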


\begin{proposition}[Directional rejection witness]
Assume there exists a tested direction $v$ such that the centered finite difference is in its consistency regime and
\begin{equation}
  \operatorname{sign}\langle \hat g_i,v\rangle
  \neq
  \operatorname{sign}D_h(\theta;v),
  \qquad
  |D_h(\theta;v)|>\rho ,
\end{equation}
for a tolerance $\rho$ larger than the estimated finite-difference noise floor. Then the candidate cannot be certified by directional evidence alone; any acceptance must be justified by stronger path or solver evidence. GradRepair-ODE uses this witness to move the candidate from trusted to repairable or unsafe.
\end{proposition}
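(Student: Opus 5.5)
The plan is to turn the sign disagreement into an explicit lower bound on the gradient error. Then we show that any uncertainty ball small enough to certify the step would contradict that bound. Throughout we work in the consistency regime of the Centered directional consistency proposition. There we may write
\begin{equation}
  D_h(\theta;v)=\langle \nabla L(\theta),v\rangle+r_h,\qquad |r_h|\le\tau,
\end{equation}
where $\tau$ collects the $O(h^2)$ and $O(u/h)$ terms and is bounded by the estimated noise floor. By hypothesis, $\tau<\rho<|D_h(\theta;v)|$.

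\textbf{Step 1: the sign of the finite difference is the sign of the true directional derivative.} Since $|D_h|>\rho>\tau\ge|r_h|$, the quantity $\langle\nabla L(\theta),v\rangle$ has the same sign as $D_h(\theta;v)$. Its magnitude satisfies $|\langle\nabla L(\theta),v\rangle|\ge |D_h|-\tau>\rho-\tau>0$.

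\textbf{Step 2: lower bound on the error.} Write $g=\nabla L(\theta)$. Because $\langle\hat g_i,v\rangle$ has the opposite (or zero) sign, we get
\begin{equation}
  |\langle g-\hat g_i,v\rangle|=|\langle g,v\rangle|+|\langle\hat g_i,v\rangle|\ge |D_h|-\tau .
\end{equation}
With $\|v\|_2=1$, Cauchy--Schwarz then gives $\|g-\hat g_i\|_2\ge |D_h|-\tau+|\langle\hat g_i,v\rangle|>0$. So the directional evidence alone certifies a strictly positive gradient error. Moreover, $g$ lies outside the open half-space $\{g':\operatorname{sign}\langle g',v\rangle=\operatorname{sign}\langle\hat g_i,v\rangle\}$ by a definite margin.

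\textbf{Step 3: the certificate cannot rest on directional evidence alone.} Suppose the radius $\widehat\epsilon_i$ is set using only the directional witness. Under the Certificate calibration assumption, it must then be at least the error bound from Step 2, so that $g\in\mathcal{G}_i$. Certification via the Certified descent margin proposition then needs $\|\hat g_i\|_2>\widehat\epsilon_i+m/\|\hat g_i\|_2$. The witness supplies no such upper bound on the error, only a lower bound. So any claim that $\|\hat g_i\|_2$ dominates the true error must come from another source, namely path agreement or solver diagnostics that shrink the estimate. This is exactly the stated conclusion.

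Combined with Monotonicity of evidence, the larger radius can only reduce $\Delta_i$. When $|D_h|-\tau\ge\|\hat g_i\|_2$, the Rejection is informative corollary applies directly, and the candidate is unsafe unless repaired. This justifies moving the candidate from trusted to repairable or unsafe.

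The main obstacle is that the statement is partly qualitative. Its phrases ``cannot be certified by directional evidence alone'' and ``consistency regime'' have to be made precise, and I would formalize them as above with the bound $|r_h|\le\tau<\rho$. A second subtlety is that a sign flip along one direction does not by itself force $\langle g,\hat g_i\rangle\le 0$. The honest conclusion is therefore the error lower bound of Step 2, not outright non-descent.
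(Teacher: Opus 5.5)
Your proposal is correct and follows the paper's argument: in the consistency regime $D_h(\theta;v)$ matches $\langle g,v\rangle$ to within errors below $\rho$, so a sign mismatch above that floor means the candidate contradicts the witness and trust is no longer supported. You go further than the paper by making this quantitative, deriving $\|g-\hat g_i\|_2\ge |D_h(\theta;v)|-\tau+|\langle\hat g_i,v\rangle|$ (the appendix finite-difference residual/projection-error inequality specialized to a sign flip) and feeding it into $\Delta_i$, which requires reading the calibration assumption as $g\in\mathcal{G}_i$; the paper stays qualitative and keeps the hedge that a corrupted witness might leave the candidate correct, a case your hard bound $|r_h|\le\tau<\rho$ rules out by assumption.
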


\begin{proof}
In the finite-difference consistency regime, $D_h(\theta;v)$ estimates $\langle g,v\rangle$ up to truncation and roundoff errors below $\rho$. A sign mismatch above that floor means the candidate and the local directional witness disagree on whether moving along $v$ increases or decreases the loss. The candidate may still be correct if the witness is corrupted, but the available evidence is no longer sufficient for trust.
\end{proof}

\subsection{Repair as Constrained Selection}

Let $\mathcal{I}$ be the available set of gradient paths and let $C_i$ denote the cost of candidate $i$. GradRepair-ODE selects
\begin{equation}
  i^\star
  \in
  \arg\min_{i\in\mathcal{I}} C_i
  \quad
  \text{subject to}
  \quad
  \Delta_i>0,\quad A_k(\hat g_i)\geq \tau .
\end{equation}
If no candidate satisfies the constraints, the optimizer step is rejected. Repair is constrained selection under numerical evidence, not an unconditional move toward the strictest computation.

\begin{proposition}[Selective repair cost]
Let the naive path cost be $C_0$, diagnostic cost $C_d$, repair cost $C_r$, and always-strict cost $C_s$, all measured in function evaluations. If a fraction $p$ of steps trigger repair, the expected GradRepair cost is
\begin{equation}
  C_{\mathrm{GR}}=C_0+C_d+pC_r.
\end{equation}
An always-strict strategy is more expensive whenever
\begin{equation}
  p < \frac{C_s-C_0-C_d}{C_r}.
\end{equation}
This gives the cost-reliability premise of GradRepair-ODE: use cheap gradients when the evidence is coherent, pay for repair when evidence breaks, and reject when no path produces a certified step.
\end{proposition}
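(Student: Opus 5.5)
The plan is to treat the statement as a cost-accounting identity followed by a linear inequality; no earlier analytic result is needed beyond the definitions of the candidate set $\mathcal{C}$ and the constrained selection rule. First I fix the accounting model. On every step GradRepair-ODE computes the naive (coarse) gradient at cost $C_0$. It then runs the diagnostic layer, meaning the pairwise path comparisons, the directional finite differences $D_h(\theta;v_j)$ and the solver-stress probes, at cost $C_d$. On the steps where the certificate reports Repairable it additionally pays $C_r$. I will state explicitly the modelling assumption that $C_0$, $C_d$ and $C_r$ are per-step constants, or step-averaged quantities, measured in function evaluations. For a rejected step whose repair attempt was made, I count its cost inside $C_r$ as well, since the repair computation is performed before rejection.

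Next I introduce the indicator $R_t=\mathbf{1}[\text{step } t \text{ triggers repair}]$. The per-step cost is then $C_0+C_d+R_t C_r$. Taking expectations, or averaging over $T$ steps, and using linearity with $\mathbb{E}[R_t]=p$ gives $C_{\mathrm{GR}}=C_0+C_d+pC_r$. If $C_r$ is allowed to vary by step, the same identity holds with $C_r$ interpreted as the conditional mean repair cost given that a repair is triggered. This uses $\mathbb{E}[R_tC_{r,t}]=p\,\mathbb{E}[C_{r,t}\mid R_t=1]$, and I will remark on it so that the formula is not tied to constant costs.

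The comparison step compares $C_s$ with $C_{\mathrm{GR}}$. Always-strict is more expensive exactly when $C_s>C_0+C_d+pC_r$. Rearranging, and dividing by $C_r>0$ (which must be assumed: repair costs strictly positive function evaluations), gives $p<(C_s-C_0-C_d)/C_r$. I will also note the boundary cases. If $C_s\le C_0+C_d$, the threshold is nonpositive and no $p\in[0,1]$ favours GradRepair. If the threshold exceeds $1$, GradRepair is cheaper for every repair rate.

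The mathematics is trivial. The main obstacle is making the cost model honest. In particular, one must decide whether the strict path's cost is already contained in $C_d$, since the strict path is used as a certification reference and that would double count. One must also decide whether a repair replaces or adds to the coarse computation. I will resolve this by defining $C_r$ as the incremental cost beyond $C_0+C_d$, which makes the formula additive as stated.
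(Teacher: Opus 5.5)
Your proof is correct and follows the argument the statement implies. The paper gives no separate proof, and the result comes from linearity over the repair indicator, $\mathbb{E}[C_0+C_d+R_tC_r]=C_0+C_d+pC_r$, followed by rearranging $C_s>C_0+C_d+pC_r$ with $C_r>0$. Your added conventions ($C_r$ strictly positive, incremental, and including repair attempts on rejected steps), the equivalence, and the boundary cases are implicit assumptions the formula needs and a harmless sharpening of the one-directional claim.
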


\begin{proposition}[Dominance over warning-only diagnostics]
Consider two policies that observe the same certificate. A warning-only policy always applies the original candidate, while a certificate-controlled policy applies only candidates satisfying $\Delta_i>0$ and rejects otherwise. On any step where the original candidate has $\Delta_i\leq 0$, the warning-only policy accepts an uncertified update and the certificate-controlled policy does not. Therefore, for a fixed sequence of certificates, certificate control weakly reduces the number of uncertified accepted steps relative to warning-only diagnostics.
\end{proposition}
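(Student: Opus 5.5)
The plan is a step-by-step comparison followed by summation. Fix the sequence of certificates $(\mathrm{Cert}_t)_{t=1}^T$ and, for each step $t$, the original candidate $\hat g^{(t)}_0$ with its margin $\Delta^{(t)}_0$ from \eqref{eq:descent-margin}. Call an accepted update \emph{uncertified} when the applied candidate has $\Delta\leq 0$. By Proposition (Certified descent margin), this is equivalent to failing the robust descent condition~\eqref{eq:robust-descent}. Because both policies observe the same certificates, the margins $\Delta^{(t)}_i$ are fixed quantities and do not depend on which policy is run. This is what the phrase ``for a fixed sequence of certificates'' buys us, and I would state it explicitly.

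I would then introduce indicators $U^{W}_t$ and $U^{C}_t$ marking whether the warning-only and certificate-controlled policies, respectively, accept an uncertified update at step $t$. I would split into two cases.

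\begin{itemize}
\item \textbf{Case $\Delta^{(t)}_0\leq 0$.} The warning-only policy applies $\hat g^{(t)}_0$, so $U^W_t=1$. The certificate-controlled policy only ever applies a candidate with $\Delta_i>0$; otherwise it rejects. So $U^C_t=0$.
\item \textbf{Case $\Delta^{(t)}_0>0$.} The warning-only update is certified, so $U^W_t=0$. The controlled policy again applies only candidates with positive margin, or rejects, so $U^C_t=0$.
\end{itemize}

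In both cases $U^C_t=0\leq U^W_t$. Summing over $t$ gives $\sum_t U^C_t=0\leq\sum_t U^W_t$. This is the weak reduction claimed, and in fact a stronger statement: the controlled policy accepts no uncertified steps at all. Equality holds exactly when every original candidate is already certified.

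The main obstacle is not the algebra but making sure the comparison is well posed. In an actual training run the two policies produce different parameter trajectories, so later certificates would differ. I would handle this by reading the statement pathwise. Either the certificates are an exogenous fixed sequence, as the statement says, or the comparison is made step by step against the same state $\theta_t$. In the second reading the conclusion is still that the controlled policy never accepts an uncertified update, which suffices for the claim. I would also note that the argument uses only the acceptance rule $\Delta_i>0$. It does not depend on the calibration in Assumption 2, so the dominance holds regardless of whether $\widehat\epsilon_i$ is accurate; calibration affects only what ``certified'' means in terms of true descent.
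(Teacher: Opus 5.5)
Your proof is correct and follows the paper's own argument: split each step on the sign of the original margin, observe that the certificate-controlled policy never accepts an uncertified update, and sum over the fixed certificate sequence. Your refinements are also valid. The controlled count is identically zero, so the inequality survives even if the two trajectories diverge, and the argument uses only the acceptance rule $\Delta_i>0$, not the calibration assumption.
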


\begin{proof}
The statement follows directly from the policy definitions. Steps with $\Delta_i>0$ may be accepted by both policies. Steps with $\Delta_i\leq 0$ are accepted by the warning-only policy but withheld by the certificate-controlled policy. Summing over the sequence gives the weak reduction.
\end{proof}

\begin{corollary}[Irreparable steps]
For discontinuous or hybrid dynamics, an ordinary smooth-gradient path may remain uncertified even after recomputation. Preventing an unsupported update is a valid certified action.
\end{corollary}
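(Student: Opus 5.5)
The plan is to make the corollary precise and then reduce it to the support-function lemma and the certified descent margin proposition. "Uncertified" will mean $\Delta_i\le 0$ in \eqref{eq:descent-margin}. "Ordinary smooth-gradient path" will mean any candidate in $\mathcal{C}$ whose computation differentiates the smooth flow map and ignores the jump in the event-discontinuous trajectory.

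The argument has two parts.

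\emph{Part one: recomputation need not restore certification.} Take the event case, where $L$ jumps at an event time $\tau(\theta)$. Each smooth path differentiates the flow on the current side of the event. Refining tolerances therefore drives $\hat g_i$ to the one-sided smooth gradient $g_{\mathrm{sm}}$, not to anything that controls $L$ across the event. Meanwhile the centered difference picks up the jump: $D_h(\theta;v)\approx \langle g_{\mathrm{sm}},v\rangle + J/(2h)$ when the event crosses inside $[\theta-hv,\theta+hv]$. So $|D_h|$ exceeds any fixed $\rho$ as $h\to 0$, and the sign can disagree with $\langle \hat g_i,v\rangle$. By the directional rejection witness proposition, the candidate then cannot be trusted on directional evidence. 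Under the calibration assumption, the residual $R_{\mathrm{FD}}$ enters $\widehat\epsilon_i$ with a positive weight and does not shrink under recomputation. Choosing a jump for which this inflated radius satisfies $\widehat\epsilon_i\ge\|\hat g_i\|_2$ gives $\Delta_i\le -m\le 0$ for every smooth candidate. The rejection-is-informative corollary then says the uncertainty ball contains a non-descent gradient. Hence the feasible set of the constrained-selection problem is empty, and the step is rejected.

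\emph{Part two: rejection is a valid certified action.} Rejection keeps $\theta$ unchanged, so $L$ is trivially not increased. By the dominance proposition, the certificate-controlled policy accepts no step with $\Delta_i\le0$, whereas warning-only would. Refusing the update is therefore exactly the certified outcome when the feasible set is empty.

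The main obstacle is that "may remain uncertified" is existential, and the link from the jump to the radius runs only through the calibration assumption, which is empirical. I would not try to prove that every event problem fails. Instead I would exhibit one concrete scalar example, such as a bouncing or threshold system with a known jump $J$, and check explicitly that $R_{\mathrm{FD}}$ stays bounded below under refinement. That suffices for the "may" claim.
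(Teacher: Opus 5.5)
Your Part two matches what the paper relies on. The paper gives this corollary no separate proof. It rests on three things: the rule that an empty feasible set in the constrained selection means rejection, the dominance proposition, and the rejection-is-informative corollary. The ``may remain uncertified'' half is supported only by the bouncing-ball run.

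Your Part one tries to derive that half analytically, and it breaks at the step ``choosing a jump for which this inflated radius satisfies $\widehat\epsilon_i\ge\|\hat g_i\|_2$.'' The residual $R_{\mathrm{FD}}$ is \emph{relative}. Write $a=\langle \hat g_i,v\rangle\approx\langle g_{\mathrm{sm}},v\rangle$ and $c=J/(2h)$. Then $R_{\mathrm{FD}}\approx |c|/(|a+c|+\delta)$, which tends to $1$ exactly in the regime you invoke ($|c|\to\infty$, via $h\to 0$ or a large jump). So a large jump contributes only about $w_{\mathrm{FD}}$ to $\widehat\epsilon_i$. A large $R_{\mathrm{FD}}$ would instead need $J$ tuned against $h$ so that $D_h$ nearly vanishes, the opposite of the large-$|D_h|$ regime you use.

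More basically, rescaling $L$ by $\lambda$ multiplies $\|\hat g_i\|_2$ by $\lambda$ but leaves $D_{\cos}$, $D_{\mathrm{norm}}$ and $R_{\mathrm{FD}}$ essentially unchanged. Whether $\widehat\epsilon_i\ge\|\hat g_i\|_2$ holds is therefore decided by the weights and the scale of $L$, not by $J$. For the same reason your fallback, checking that $R_{\mathrm{FD}}$ stays bounded below under refinement, is not enough: a lower bound near $1$ gives only $\widehat\epsilon_i\gtrsim w_{\mathrm{FD}}$.

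Your appeal to the directional rejection witness is also illegitimate. Its hypothesis is that $D_h$ is in its consistency regime, within $\rho$ of $\langle g,v\rangle$. A jump inside the stencil is precisely what destroys that, and the centered-consistency proposition requires $\phi\in C^3$.

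The clean repair is to use the other constraint of the selection problem, $A_k(\hat g_i)\ge\tau$. It is a computed sign-agreement statistic and needs neither consistency nor calibration. Take a scalar parameter with $\theta$ exactly at the jump, and smooth branches whose slope has sign opposite to $J$. Placing $\theta$ at the jump matters: otherwise the crossing leaves the stencil as $h\to 0$, and $D_h$ eventually agrees with $g_{\mathrm{sm}}$, which is then the true local gradient. Every smooth candidate tends to $g_{\mathrm{sm}}$ under recomputation. Meanwhile $D_h=J/(2h)+O(1)$ has the sign of $J$ for small $h$, and the sign disagreement is invariant under $v\mapsto -v$. Hence $A_k(\hat g_i)=0<\tau$ for every candidate and every $\tau>0$. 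The feasible set is then empty independently of the size of $\widehat\epsilon_i$, and your Part two finishes the argument.

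If you prefer to keep the $\Delta_i$ route, you must state explicitly that the weights or the scale of $L$ are chosen so that $w_{\mathrm{FD}}$ comfortably exceeds $\|g_{\mathrm{sm}}\|_2$. That is a legitimate choice for a ``may'' statement, but it is a choice, not a consequence of the jump.
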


\subsection{What the Certificate Establishes}

The certificate establishes a step-local statement:
\begin{equation}
  \text{certified}
  \quad\Longleftrightarrow\quad
  \exists i\in\mathcal{I}
  \text{ such that }
  \Delta_i>0
  \text{ and }
  A_k(\hat g_i)\geq\tau .
\end{equation}
It does not establish global convergence, exact adjoint correctness, or universal solver reliability. This narrowness is a strength of the formulation. The object being certified is precisely the object consumed by the optimizer: the next update direction. When the candidate fails the test, the output is not an ambiguous warning but an optimizer action: repair with a different path, reduce trust in the step, or reject the update.

The reliability claim can fail at three levels. At the gradient level, path disagreement and finite-difference evidence can contradict the candidate. At the step level, the descent margin can fail. At the system level, the cost of repair can exceed the value of using a cheap path. The certification boundary is local, numerical, optimizer-facing, and still strong enough to change training behavior.

Appendix~A connects the diagnostics to standard inequalities. Pairwise path disagreement lower-bounds the error of at least one candidate. Finite-difference residuals lower-bound directional projection error after accounting for the finite-difference noise floor. Sensitivity growth follows the variational equation through a Gronwall-type amplification bound. These results do not turn the empirical proxy into an oracle; they explain why the proxy uses these numerical quantities.

\section{Experiments}

\subsection{Systems and Baselines}

We evaluate six synthetic systems: a harmonic oscillator, Van der Pol oscillator, Robertson chemical kinetics, Lorenz dynamics, a bouncing ball with event discontinuities, and a neural right-hand side (RHS) system. The systems cover smooth dynamics, stiffness, chaotic sensitivity, hybrid events, and a small neural vector field.

The design stresses the gradient interface. It includes cases where a coarse sensitivity path is harmless, cases where it is directionally wrong, and cases where no smooth-gradient update is defensible. We compare a naive coarse solver path, a discrete finer solver path, a checkpointed-style repair path, and GradRepair-ODE. The strict reference is used only to measure agreement. The same deterministic protocol generates all reported numbers.

Each system isolates a numerical mechanism that a scientific-computing reader can inspect. Smooth periodic dynamics should not trigger repair. Stiffness should expose tolerance and path sensitivity. Chaotic dynamics should challenge long-horizon directionality. Event discontinuities should prevent an ordinary smooth-gradient certificate from making an unsupported claim. The neural RHS checks that the reliability layer stays quiet when the learned vector field is numerically benign. The benchmark tests mechanism recognition as well as final loss.

The baselines address different objections. The naive solver path tests the common training interface in which a returned gradient is applied without further numerical evidence. Gradient clipping tests whether magnitude control is enough. Armijo-style line search tests whether sufficient decrease on the proposed move can substitute for gradient-source repair. GradRepair-ODE faces a stricter test: it must reduce unsafe accepted updates, preserve benign updates, repair cases with coherent numerical alternatives, and reject cases where smooth-gradient evidence is inadequate.

\subsection{Deterministic Protocol}

All tables and figures come from the same recorded diagnostic run. The random seed fixes the initial conditions, parameter perturbations, and finite-difference directions. For each system, the experiment records the loss, gradient candidates, cosine similarity to the strict reference, finite-difference residuals, failure diagnosis, repair action, acceptance decision, wall-clock time, and function-evaluation counts. Training-level summaries are computed from the saved step records. The ablation table replays the same evidence, so changes across rows reflect decision logic rather than a different numerical run.

The protocol centers the optimizer decision instead of a post-hoc curve. Repair without cost accounting is incomplete. Detection without control is incomplete. Rejecting every suspicious step is safe but uninformative. The recorded protocol keeps those cases separate.

\subsection{Gradient Reliability}

Table~\ref{tab:gradient} shows the reliability pattern. Harmonic, Van der Pol, and neural RHS dynamics have cosine agreement close to one and no repair. Robertson and bouncing-ball gradients reverse direction under the naive path, with cosine similarity $-1.000$ against the strict reference. Lorenz does not fully reverse, but its naive cosine similarity drops to $0.698$. GradRepair restores Robertson and Lorenz to cosine similarity $1.000$ and refuses the event-discontinuous bouncing-ball update, where the certificate cannot justify a smooth-gradient claim.

\begin{table}[H]
  \centering
  \caption{Gradient reliability across ODE systems. Gray cells mark quantities that triggered certificate attention: large finite-difference risk or a nonzero repair rate. A dash indicates that no repaired gradient was accepted. Cosine similarity is measured against a strict reference gradient. Finite-difference risk is $\log_{10}(1+\mathrm{finite\mbox{-}difference\ residual})$, with the raw residual preserved in the experiment log.}
  \label{tab:gradient}
  \begin{tabular}{llllll}\toprule
\rowcolor{gray!10} System & Naive cos & Discrete cos & GradRepair cos & Finite-difference risk & Repair rate \\
\midrule
Harmonic & 1.000 & 1.000 & 1.000 & 0.003 & 0.00 \\
Van der Pol & 1.000 & 1.000 & 1.000 & 0.064 & 0.00 \\
Robertson & -1.000 & -1.000 & 1.000 & \cellcolor{gray!12}8.140 & \cellcolor{gray!12}1.00 \\
Lorenz & 0.698 & 1.000 & 1.000 & 0.228 & \cellcolor{gray!12}1.00 \\
Bouncing ball & -1.000 & -1.000 & -- & \cellcolor{gray!12}0.934 & \cellcolor{gray!12}1.00 \\
Neural RHS & 1.000 & 1.000 & 1.000 & 0.001 & 0.00 \\
\bottomrule
\end{tabular}

\end{table}

\begin{figure}[H]
  \centering
  \includegraphics[width=\linewidth]{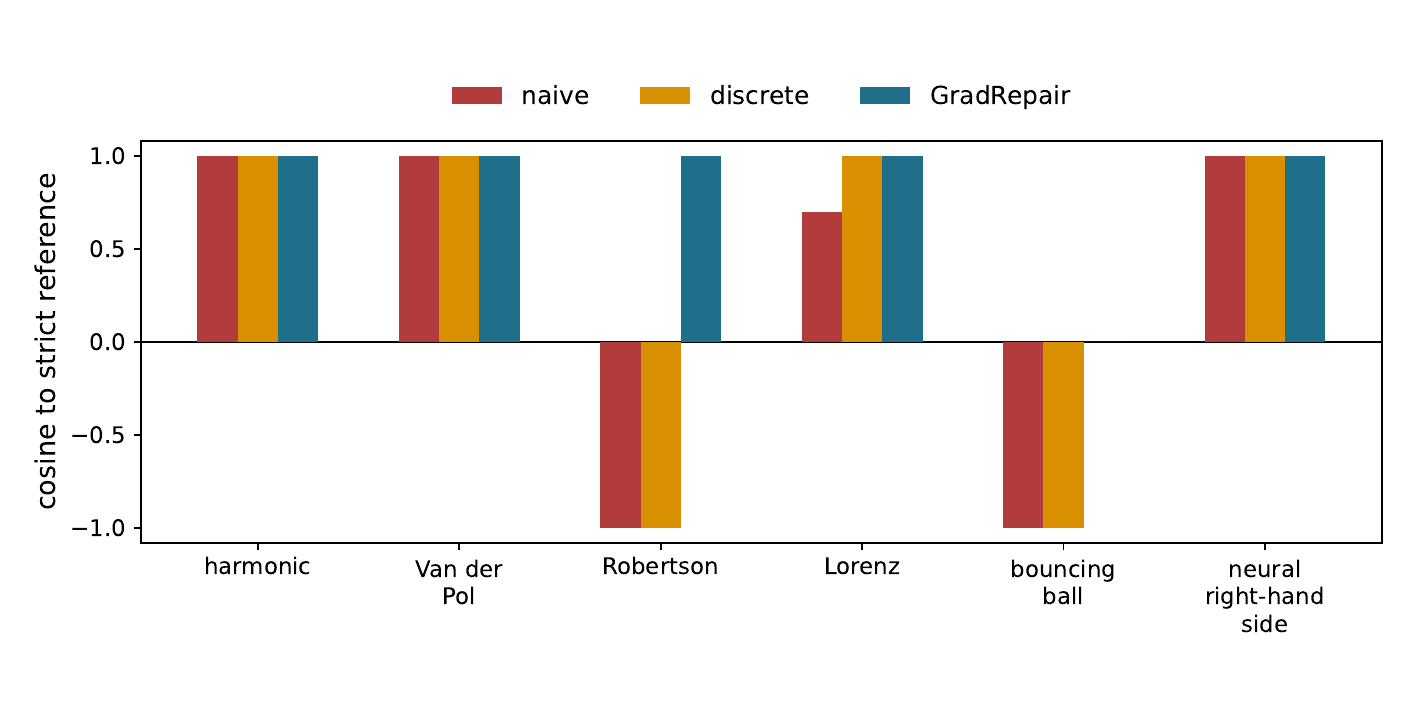}
  \caption{Gradient direction agreement against a strict reference. GradRepair-ODE leaves smooth low-risk systems unchanged, while exposing failure or repair needs in stiff, chaotic, and event-driven systems.}
  \label{fig:direction}
\end{figure}

\begin{table}[H]
  \centering
  \caption{Failure routing induced by the certificate. Gray cells mark final accepted or rejected interventions selected by the certificate. The table reports the primary numerical evidence, selected repair action, final optimizer decision, and function-evaluation multiplier for each system.}
  \label{tab:routing}
  \begin{tabular}{llllr}\toprule
\rowcolor{gray!10} System & Primary evidence & Routing action & Final step & Cost multiplier \\
\midrule
Harmonic & consistent paths & none & accepted & 1.0 \\
Van der Pol & consistent paths & none & accepted & 1.0 \\
Robertson & finite-difference residual & strict recomputation & \cellcolor{gray!12}accepted & 301.0 \\
Lorenz & path disagreement & checkpointed routing & \cellcolor{gray!12}accepted & 9.3 \\
Bouncing ball & finite-difference residual & reject & \cellcolor{gray!12}rejected & 41.0 \\
Neural RHS & consistent paths & none & accepted & 1.0 \\
\bottomrule
\end{tabular}

\end{table}

\subsection{Training Stability}

Table~\ref{tab:training} reports the optimizer-facing consequence. The naive loop accepts 37 uncertified updates across Robertson, Lorenz, and bouncing ball. GradRepair reduces that count to zero. The method is not conservative by default: low-risk systems are trusted, Lorenz and Robertson are repaired, and the event-discontinuous system is rejected. The intervention occurs where the evidence changes the status of the step.

\begin{table}[H]
  \centering
  \caption{Training stability summary. Red cells mark unsafe updates accepted by the naive loop, green cells mark repaired or cleared unsafe counts, yellow cells mark certified rejection, and gray cells mark trusted no-intervention decisions. Unsafe accepted counts steps that the naive loop applied despite a non-trusted certificate.}
  \label{tab:training}
  \begin{tabular}{lrrrrrrl}\toprule
\rowcolor{gray!10} System & \shortstack{Naive\\loss} & \shortstack{GradRepair\\loss} & \shortstack{Naive\\unsafe} & \shortstack{GradRepair\\unsafe} & \shortstack{GradRepair\\rejected} & \shortstack{Repair\\rate} & Decision \\
\midrule
Harmonic & 1.273e-05 & 1.273e-05 & 0 & 0 & 0 & 0.00 & \cellcolor{gray!10}trust \\
Van der Pol & 3.081 & 3.081 & 0 & 0 & 0 & 0.00 & \cellcolor{gray!10}trust \\
Robertson & 0.03433 & 0.03483 & \cellcolor{red!16}18 & \cellcolor{green!12}0 & 0 & \cellcolor{green!12}1.00 & \cellcolor{green!14}repair \\
Lorenz & 48.42 & 48.33 & \cellcolor{red!16}18 & \cellcolor{green!12}0 & 0 & \cellcolor{green!12}1.00 & \cellcolor{green!14}repair \\
Bouncing ball & 6.408 & 1.068 & \cellcolor{red!16}1 & \cellcolor{green!12}0 & \cellcolor{yellow!18}18 & \cellcolor{green!12}1.00 & \cellcolor{yellow!18}reject \\
Neural RHS & 0.09854 & 0.09854 & 0 & 0 & 0 & 0.00 & \cellcolor{gray!10}trust \\
\bottomrule
\end{tabular}

\end{table}

\begin{figure}[H]
  \centering
  \includegraphics[width=\linewidth]{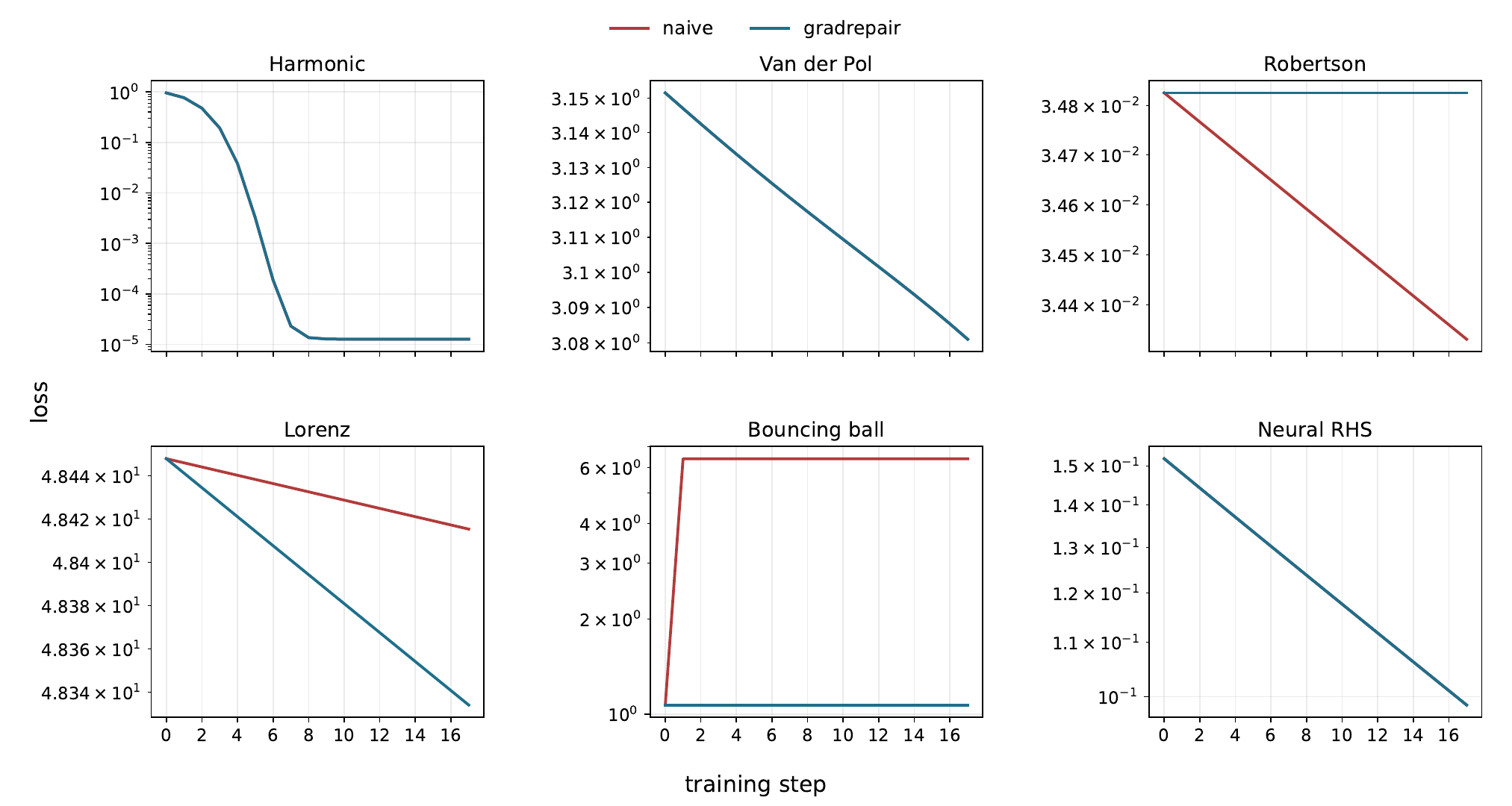}
  \caption{Loss trajectories under naive training and GradRepair-ODE. The goal is reliability control rather than aggressive optimization; rejected steps preserve evidence when no safe gradient is available.}
  \label{fig:loss}
\end{figure}

\subsection{Ablation and Sensitivity}

Table~\ref{tab:ablation} isolates the reliability components. The ablation asks how many unsafe updates remain when detection, repair routing, finite-difference evidence, or step certification is removed. Detection without action leaves all unsafe accepted updates in place. Removing step certification allows the event-discontinuous case to pass after a failed repair. Removing repair routing avoids unsafe updates but rejects every non-trusted step, including repairable Robertson and Lorenz updates. The full system repairs repairable steps, rejects the irreparable event case, and leaves zero unsafe accepted updates.

\begin{table}[H]
  \centering
  \caption{Ablation study from the same recorded step evidence. Red cells mark unsafe accepted updates, green cells mark repaired or cleared unsafe counts, and yellow cells mark rejected updates.}
  \label{tab:ablation}
  \begin{tabular}{lrrrl}\toprule
\rowcolor{gray!10} Policy & Unsafe accepted & Repaired & Rejected & Main effect \\
\midrule
naive & \cellcolor{red!16}37 & 0 & 0 & no control \\
detect only & \cellcolor{red!16}37 & 0 & 0 & warning only \\
no finite-difference evidence & \cellcolor{red!16}18 & \cellcolor{green!12}36 & 0 & event risk passes \\
no repair routing & \cellcolor{green!12}0 & 0 & \cellcolor{yellow!18}54 & over-rejects \\
no step certification & \cellcolor{red!16}18 & \cellcolor{green!12}54 & 0 & applies failed repair \\
full GradRepair-ODE & \cellcolor{green!12}0 & \cellcolor{green!12}54 & \cellcolor{yellow!18}18 & repair or reject \\
\bottomrule
\end{tabular}

\end{table}

\begin{figure}[H]
  \centering
  \includegraphics[width=0.95\linewidth]{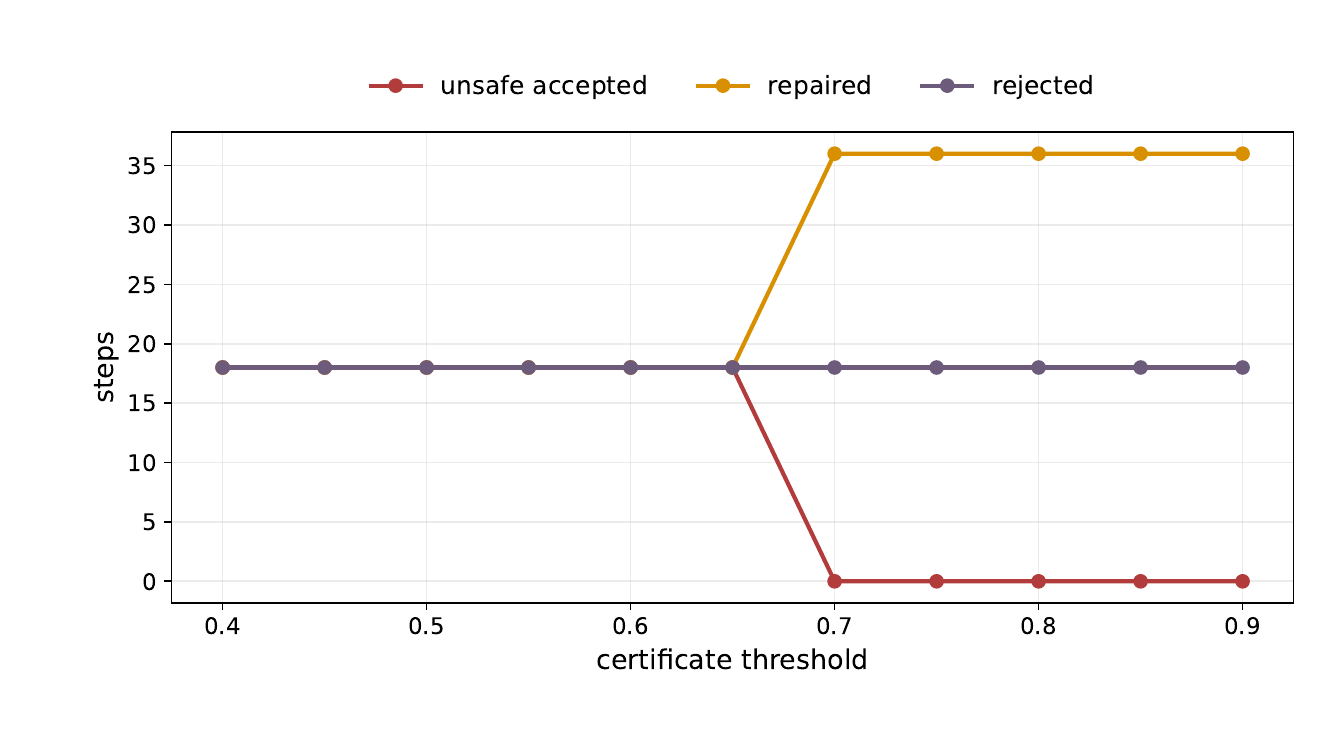}
  \caption{Threshold sensitivity for the certificate decision. The reliability layer has a stable operating region: stricter thresholds increase repair and rejection, while unsafe accepted steps disappear once suspicious gradients are no longer trusted by default.}
  \label{fig:threshold}
\end{figure}

\begin{figure}[H]
  \centering
  \includegraphics[width=0.95\linewidth]{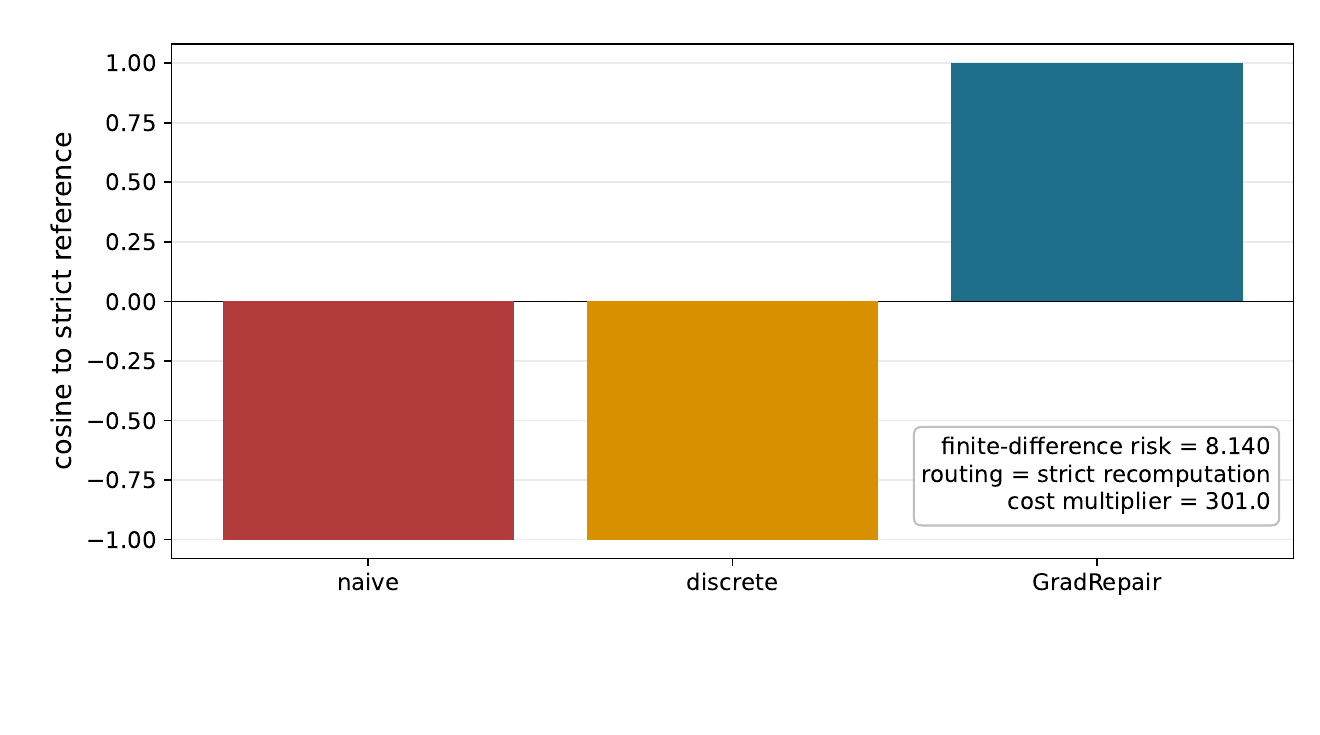}
  \caption{Robertson case study. The naive and discrete paths point in the opposite direction from the strict reference, while GradRepair-ODE routes to strict recomputation and restores cosine similarity to $1.000$.}
  \label{fig:case}
\end{figure}

\subsection{Cost and Classical Safeguards}

Table~\ref{tab:cost} reports wall-clock time and function-evaluation overhead for the diagnostic step. Low-risk systems incur essentially no repair overhead, Lorenz is repaired at moderate cost, and Robertson shows the expensive edge of strict recomputation. GradRepair-ODE does not claim that certification is free. It makes the price of reliable gradients explicit and pays it only where the evidence requires it.

\begin{table}[H]
  \centering
  \caption{Measured diagnostic cost. Yellow cells mark time or function-evaluation multipliers above $5\times$. Wall-clock measurements are generated by the same deterministic experiment script and reported in milliseconds.}
  \label{tab:cost}
  \begin{tabular}{lrrrr}\toprule
\rowcolor{gray!10} System & Naive ms & GradRepair ms & Time multiplier & NFE multiplier \\
\midrule
Harmonic & 4.48 & 4.48 & 1.0 & 1.0 \\
Van der Pol & 0.54 & 0.54 & 1.0 & 1.0 \\
Robertson & 0.34 & 29.04 & \cellcolor{yellow!18}85.9 & \cellcolor{yellow!18}301.0 \\
Lorenz & 2.10 & 17.74 & \cellcolor{yellow!18}8.4 & \cellcolor{yellow!18}9.3 \\
Bouncing ball & 0.23 & 2.74 & \cellcolor{yellow!18}11.9 & \cellcolor{yellow!18}41.0 \\
Neural RHS & 2.69 & 2.69 & 1.0 & 1.0 \\
\bottomrule
\end{tabular}

\end{table}

Table~\ref{tab:safeguards} compares GradRepair-ODE with two standard optimizer safeguards. Gradient clipping controls magnitude but cannot repair a wrong direction \cite{pascanu2013difficulty}. Line search checks whether a proposed move decreases the measured loss \cite{armijo1966minimization,nocedal2006optimization}, but it does not identify the numerical source of the gradient error and may reject without producing a reliable replacement direction. GradRepair-ODE addresses a different failure mode: it uses numerical evidence to decide whether to accept, repair, or reject the gradient before the optimizer consumes it.

\begin{table}[H]
  \centering
  \caption{Comparison with classical safeguards. Red cells mark unsafe accepted updates, green cells mark cleared unsafe counts, yellow cells mark rejected updates, and light red cells mark loss spikes.}
  \label{tab:safeguards}
  \begin{tabular}{llrrr}\toprule
\rowcolor{gray!10} Method & Mechanism & Unsafe accepted & Rejected & Loss spikes \\
\midrule
naive & coarse update & \cellcolor{red!16}37 & 0 & \cellcolor{red!12}1 \\
naive + clipping & norm control only & \cellcolor{red!16}47 & 0 & \cellcolor{red!12}3 \\
naive + line search & loss decrease check & \cellcolor{red!16}37 & \cellcolor{yellow!18}17 & 0 \\
GradRepair-ODE & repair or reject & \cellcolor{green!12}0 & \cellcolor{yellow!18}18 & 0 \\
\bottomrule
\end{tabular}

\end{table}

\begin{figure}[H]
  \centering
  \includegraphics[width=0.95\linewidth]{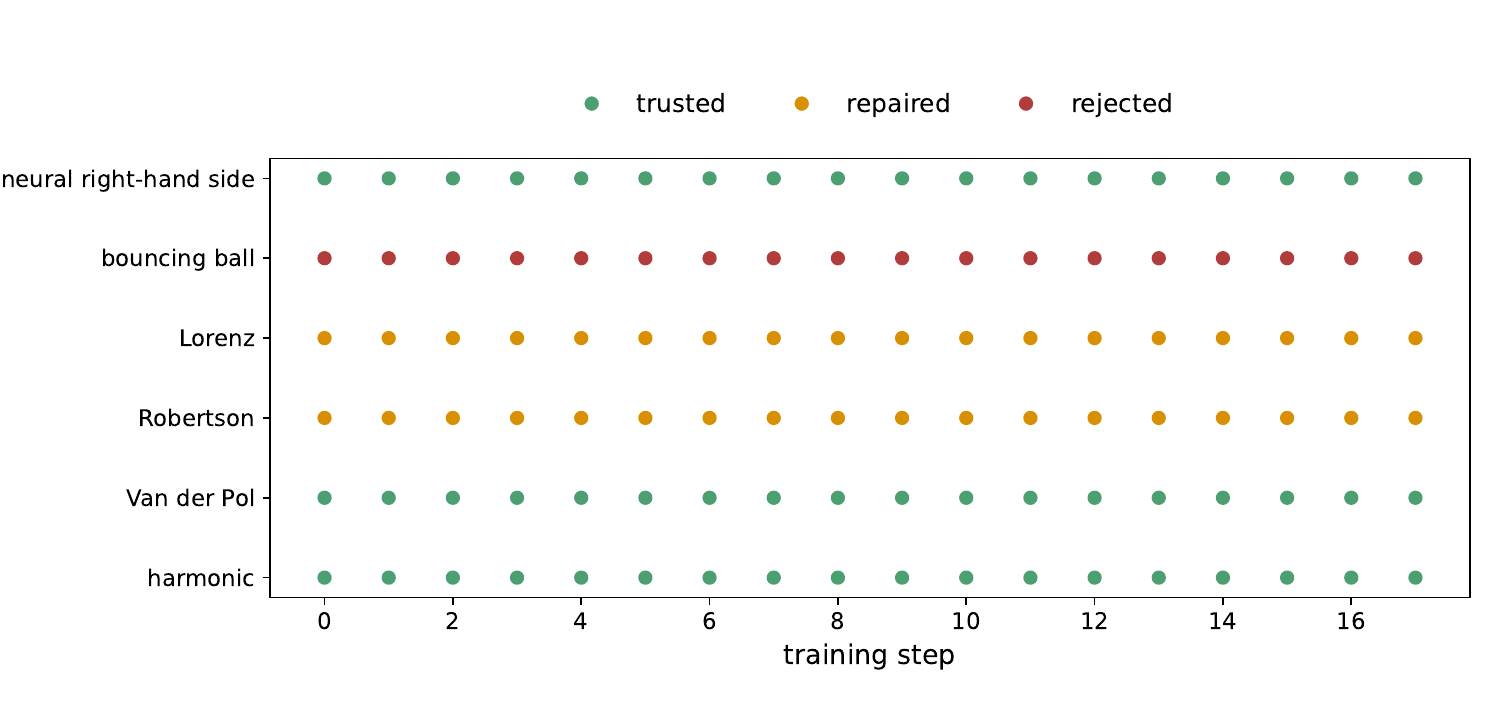}
  \caption{Per-step GradRepair decisions. Trusted steps are left unchanged, repaired steps are routed to a safer gradient path, and rejected steps are withheld when no smooth-gradient claim is certified.}
  \label{fig:timeline}
\end{figure}

\begin{figure}[H]
  \centering
  \includegraphics[width=0.95\linewidth]{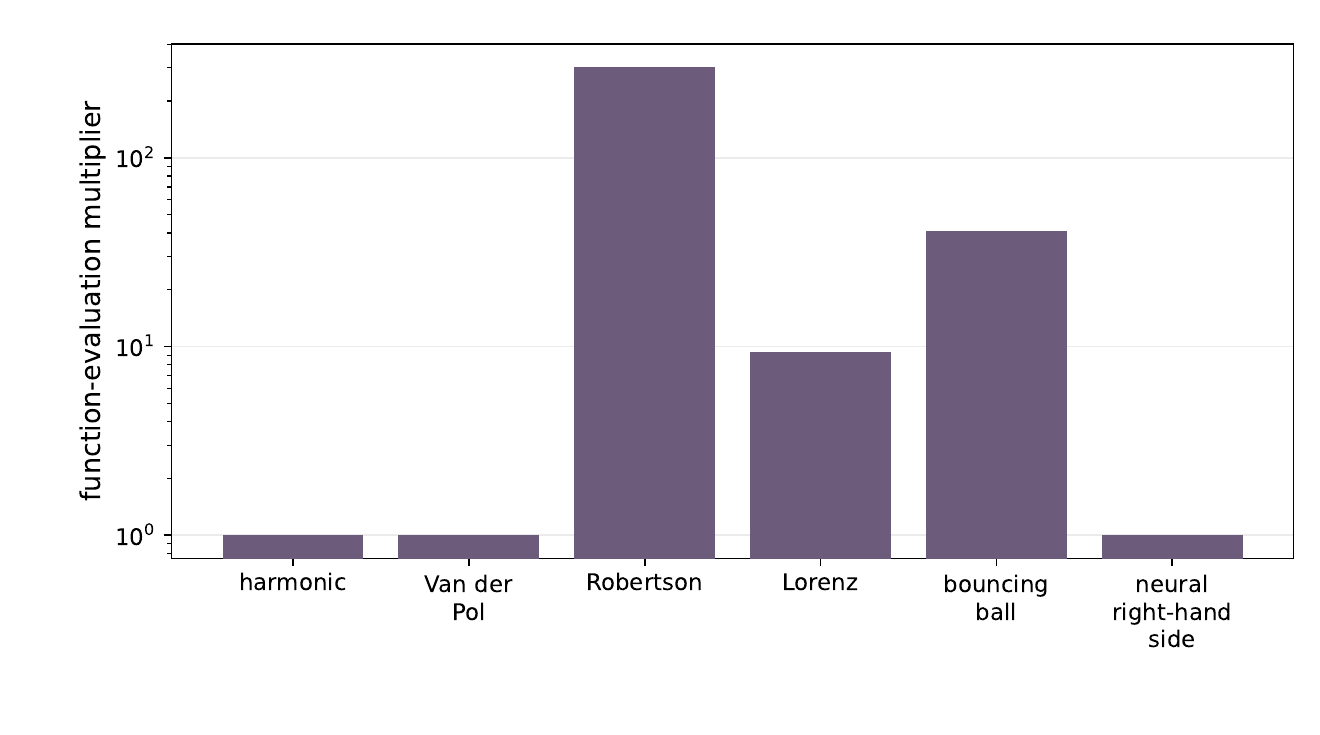}
  \caption{Repair cost measured as the multiplier in function evaluations relative to the naive gradient path, shown on a logarithmic scale to preserve the full Robertson cost.}
  \label{fig:cost}
\end{figure}

\begin{figure}[H]
  \centering
  \includegraphics[width=0.95\linewidth]{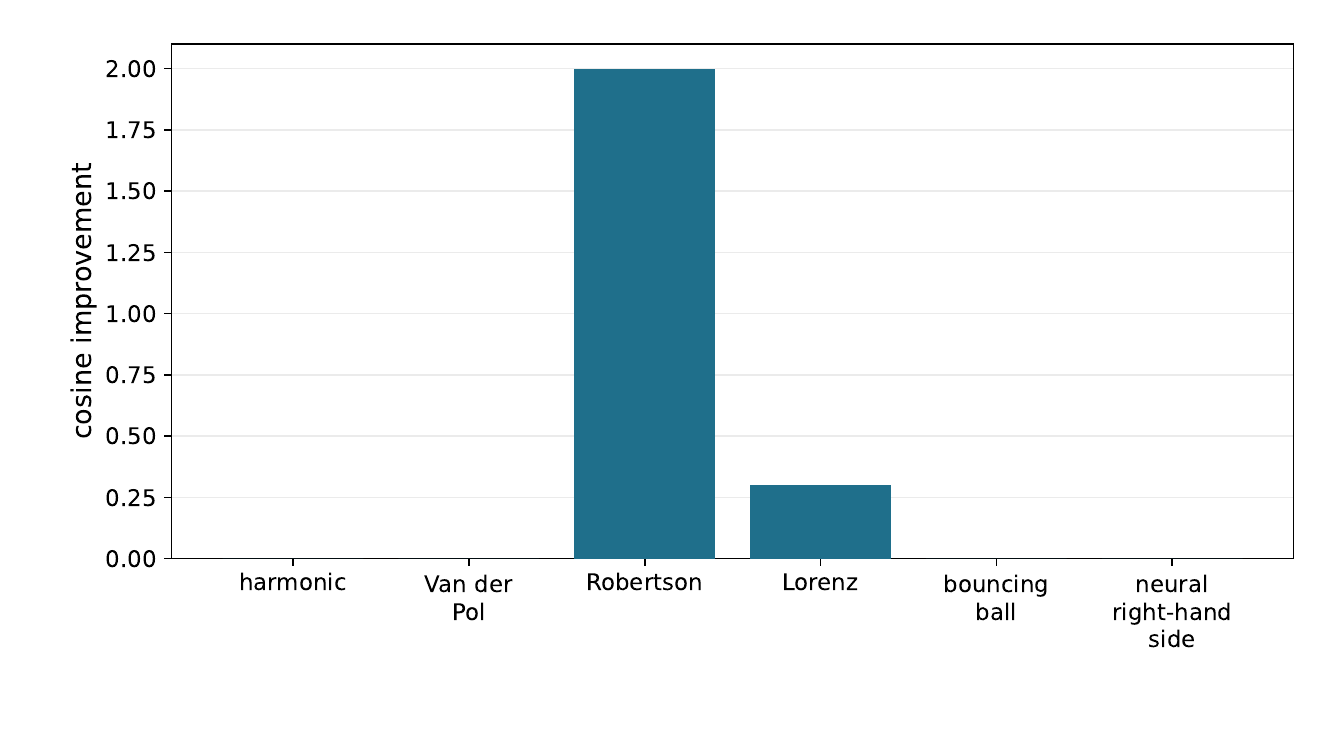}
  \caption{Reliability gain measured as cosine improvement relative to the naive gradient path. The largest gains occur where the naive direction is inconsistent with the strict reference.}
  \label{fig:gain}
\end{figure}

\begin{figure}[H]
  \centering
  \includegraphics[width=0.95\linewidth]{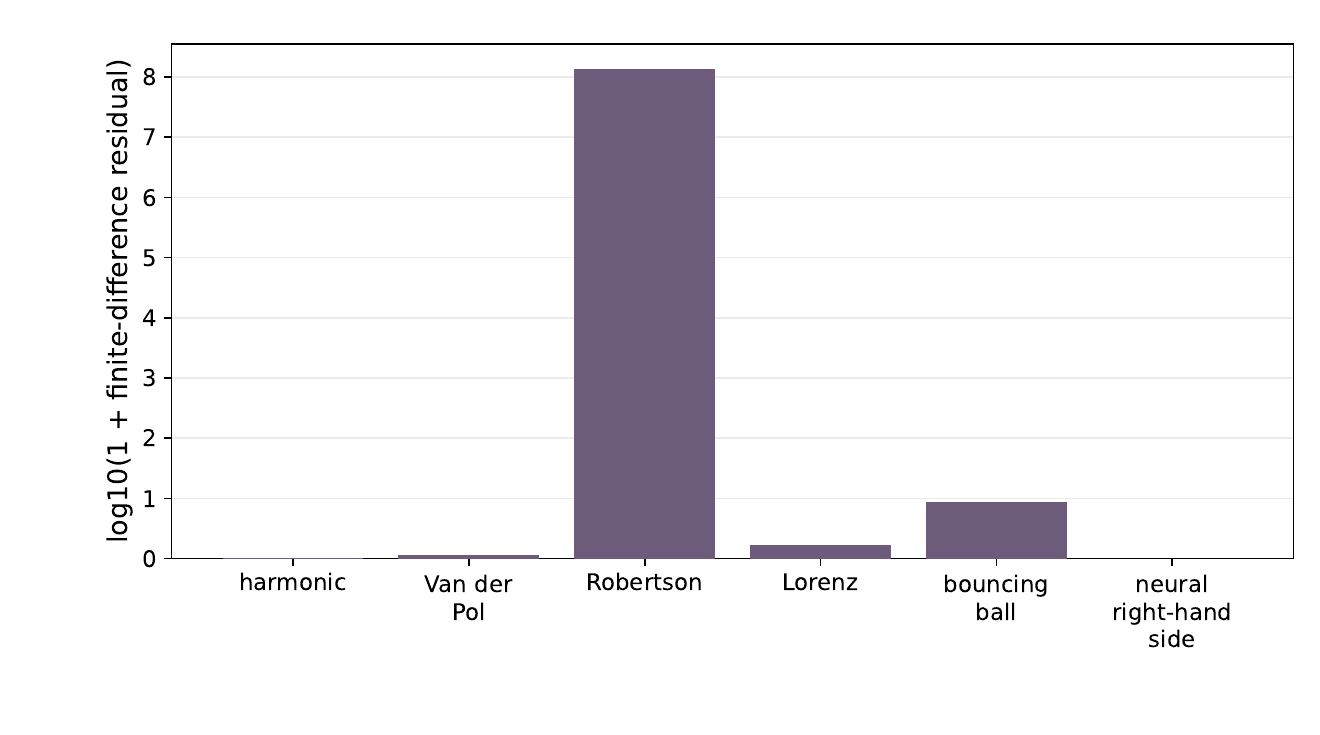}
  \caption{Directional finite-difference residuals. Large residuals identify candidates whose directional derivative evidence is inconsistent with the proposed gradient.}
  \label{fig:fd}
\end{figure}

\section{Discussion}

The experiments show that gradient reliability can be handled at the optimizer boundary. GradRepair-ODE changes the update stream: low-risk harmonic and neural RHS systems pass without intervention, Robertson and Lorenz are repaired to agreement with a strict reference, and the event-discontinuous system is rejected when the smooth-gradient evidence is insufficient. The repair-controlled loop eliminates unsafe accepted steps in these runs, from $37$ under the naive policy to $0$ under GradRepair-ODE.

The design separates detection, diagnosis, repair, and step acceptance. A scalar loss curve rarely reveals whether instability comes from the model, the solver, the sensitivity path, or an event discontinuity. GradRepair-ODE records finite-difference residuals, pairwise gradient disagreement, solver failure, repair action, and step acceptance for each update.

This is more precise than a solver-tolerance recommendation. Tight tolerances can be wasteful when the gradient is already coherent, and they do not fix a wrong differentiability assumption, as the event-discontinuous case shows. GradRepair-ODE asks what evidence supports this step. That question gives the method its cost-reliability profile: harmonic, Van der Pol, and neural RHS systems do not pay for repair; Robertson and Lorenz do; the bouncing-ball system is not allowed to turn a discontinuity into a smooth-gradient update.

The comparison with clipping and line search clarifies the gap. Clipping modifies vector length; it does not repair direction. Line search can reject a bad move, but it does not say whether the failure came from stiffness, chaos, interpolation, event structure, or sensitivity-path mismatch. GradRepair-ODE answers the earlier numerical question: whether the vector used to define the move deserves trust.

The claim has a clear boundary. Finite differences are directional checks with truncation and floating-point error. The certificate is an empirical proxy, not a formal upper bound. Within that boundary, the experiments show that a small evidence layer can distinguish trusted, repairable, unsafe, and failed gradient states while preserving the option to refuse a step when repair would overclaim.

For scientific computing, the practical message is direct. When a differentiable solver supplies a gradient to an optimizer, it should also supply enough numerical evidence to make the update accountable. GradRepair-ODE gives one design for that evidence: multiple gradient candidates, calibrated disagreement, repair routing, and a descent certificate attached to the step itself.

\section{Limitations and Scope}

GradRepair-ODE is a reliability layer for optimizer steps, not a universal solver correctness theorem. The experiments are synthetic and diagnostic; they expose numerical failure modes under controlled conditions rather than establish application-level superiority. That scope matches the claim: the paper studies whether unreliable ODE gradients can be detected, repaired, or rejected before they become optimizer updates.

Several boundaries are visible in the formulation. Directional finite-difference checks become expensive in very high-dimensional parameter spaces. Random directional checks reduce the cost, but they do not replace full-gradient verification. Stochastic objectives or noisy mini-batches require additional calibration because finite-difference residuals can mix numerical error with sampling noise. Hybrid systems with event-time derivatives may require event-aware adjoints; the bouncing-ball experiment is therefore rejected instead of repaired. Strict recomputation can be expensive, as Robertson shows. The method reports this cost so reliability and computation can be compared directly.

The controlled benchmark suite should be read as a mechanism study. Six systems do not cover all scientific machine learning. They cover the failure modes most likely to corrupt ODE gradients: benign smooth flow, stiffness, chaos, event discontinuity, and neural-vector-field training. A domain application would add modeling assumptions; it would still need the reliability question studied here.

These limitations define the operating regime for the certificate: deterministic or low-noise differentiable ODE training, observable solver diagnostics, multiple feasible gradient paths, and optimizer steps whose safety can be evaluated locally. In that regime, GradRepair-ODE makes numerical gradient reliability a measured training variable, with direct consequences for repair, rejection, and unsafe accepted updates. The formulation is conservative when evidence is thin and assertive when evidence is coherent.

\section{Conclusion}

GradRepair-ODE treats differentiable ODE training as a numerical optimization problem in the literal sense: the optimizer receives gradients produced by approximate trajectories, approximate sensitivity paths, and solver configurations that may be under stress. A finite tensor is not enough. By combining gradient-path disagreement, directional finite-difference checks, solver diagnostics, repair routing, and descent-safe step decisions, GradRepair-ODE makes numerical fragility visible at the update level.

The evidence in this paper is step-level. Low-risk systems are left alone. Repairable stiff and chaotic gradients are routed to safer computations. An event-discontinuous case is rejected when ordinary smooth-gradient evidence is insufficient. The training loop records why a gradient was trusted, how it was repaired, or why it was withheld. That record is the contribution: an ODE gradient reaches the optimizer with numerical evidence attached.

\bibliographystyle{plain}
\bibliography{references}

@inproceedings{chen2018neuralode,
  author = {Chen, Ricky T. Q. and Rubanova, Yulia and Bettencourt, Jesse and Duvenaud, David K.},
  title = {Neural Ordinary Differential Equations},
  booktitle = {Advances in Neural Information Processing Systems},
  volume = {31},
  year = {2018},
  url = {https://proceedings.neurips.cc/paper/2018/hash/69386f6bb1dfed68692a24c8686939b9-Abstract.html}
}

@inproceedings{grathwohl2019ffjord,
  author = {Grathwohl, Will and Chen, Ricky T. Q. and Bettencourt, Jesse and Sutskever, Ilya and Duvenaud, David},
  title = {{FFJORD}: Free-Form Continuous Dynamics for Scalable Reversible Generative Models},
  booktitle = {International Conference on Learning Representations},
  year = {2019},
  url = {https://openreview.net/forum?id=rJxgknCcK7}
}

@inproceedings{song2021scorebased,
  author = {Song, Yang and Sohl-Dickstein, Jascha and Kingma, Diederik P. and Kumar, Abhishek and Ermon, Stefano and Poole, Ben},
  title = {Score-Based Generative Modeling through Stochastic Differential Equations},
  booktitle = {International Conference on Learning Representations},
  year = {2021},
  url = {https://openreview.net/forum?id=PxTIG12RRHS}
}

@inproceedings{ho2020ddpm,
  author = {Ho, Jonathan and Jain, Ajay and Abbeel, Pieter},
  title = {Denoising Diffusion Probabilistic Models},
  booktitle = {Advances in Neural Information Processing Systems},
  volume = {33},
  pages = {6840--6851},
  year = {2020},
  url = {https://proceedings.neurips.cc/paper/2020/hash/4c5bcfec8584af0d967f1ab10179ca4b-Abstract.html}
}

@inproceedings{lipman2023flowmatching,
  author = {Lipman, Yaron and Chen, Ricky T. Q. and Ben-Hamu, Heli and Nickel, Maximilian and Le, Matthew},
  title = {Flow Matching for Generative Modeling},
  booktitle = {International Conference on Learning Representations},
  year = {2023},
  url = {https://openreview.net/forum?id=PqvMRDCJT9t}
}

@inproceedings{liu2023rectifiedflow,
  author = {Liu, Xingchao and Gong, Chengyue and Liu, Qiang},
  title = {Flow Straight and Fast: Learning to Generate and Transfer Data with Rectified Flow},
  booktitle = {International Conference on Learning Representations},
  year = {2023},
  url = {https://openreview.net/forum?id=XVjTT1nw5z}
}

@article{dormand1980embedded,
  author = {Dormand, J. R. and Prince, P. J.},
  title = {A Family of Embedded {Runge--Kutta} Formulae},
  journal = {Journal of Computational and Applied Mathematics},
  volume = {6},
  number = {1},
  pages = {19--26},
  year = {1980},
  doi = {10.1016/0771-050X(80)90013-3}
}

@book{hairer1993nonstiff,
  author = {Hairer, Ernst and N{\o}rsett, Syvert P. and Wanner, Gerhard},
  title = {Solving Ordinary Differential Equations {I}: Nonstiff Problems},
  series = {Springer Series in Computational Mathematics},
  volume = {8},
  edition = {2},
  publisher = {Springer},
  year = {1993},
  doi = {10.1007/978-3-540-78862-1}
}

@article{shampine1997matlab,
  author = {Shampine, Lawrence F. and Reichelt, Mark W.},
  title = {The {MATLAB} {ODE} Suite},
  journal = {SIAM Journal on Scientific Computing},
  volume = {18},
  number = {1},
  pages = {1--22},
  year = {1997},
  doi = {10.1137/S1064827594276424}
}

@book{ascher1998computer,
  author = {Ascher, Uri M. and Petzold, Linda R.},
  title = {Computer Methods for Ordinary Differential Equations and Differential-Algebraic Equations},
  publisher = {SIAM},
  year = {1998},
  doi = {10.1137/1.9781611971392}
}

@book{higham2002accuracy,
  author = {Higham, Nicholas J.},
  title = {Accuracy and Stability of Numerical Algorithms},
  edition = {2},
  publisher = {SIAM},
  year = {2002},
  isbn = {978-0-89871-521-7}
}

@misc{scipy_solve_ivp,
  author = {{SciPy Developers}},
  title = {{scipy.integrate.solve\_ivp} Documentation},
  howpublished = {\url{https://docs.scipy.org/doc/scipy/reference/generated/scipy.integrate.solve_ivp.html}},
  note = {Accessed 2026-08-14}
}

@inproceedings{gholaminejad2019anode,
  author = {Gholaminejad, Amir and Keutzer, Kurt and Biros, George},
  title = {{ANODE}: Unconditionally Accurate Memory-Efficient Gradients for Neural {ODEs}},
  booktitle = {Proceedings of the Twenty-Eighth International Joint Conference on Artificial Intelligence},
  pages = {730--736},
  year = {2019},
  doi = {10.24963/ijcai.2019/103}
}

@inproceedings{kidger2021seminorm,
  author = {Kidger, Patrick and Chen, Ricky T. Q. and Lyons, Terry J.},
  title = {``Hey, that's not an {ODE}'': Faster {ODE} Adjoints via Seminorms},
  booktitle = {Proceedings of the 38th International Conference on Machine Learning},
  series = {Proceedings of Machine Learning Research},
  volume = {139},
  pages = {5443--5452},
  publisher = {PMLR},
  year = {2021},
  url = {https://proceedings.mlr.press/v139/kidger21a.html}
}

@misc{onken2020discopt,
  author = {Onken, Derek and Ruthotto, Lars},
  title = {Discretize-Optimize vs. Optimize-Discretize for Time-Series Regression and Continuous Normalizing Flows},
  year = {2020},
  eprint = {2005.13420},
  archivePrefix = {arXiv},
  primaryClass = {cs.LG},
  doi = {10.48550/arXiv.2005.13420}
}

@article{griewank2000revolve,
  author = {Griewank, Andreas and Walther, Andrea},
  title = {Algorithm 799: {Revolve}: An Implementation of Checkpointing for the Reverse or Adjoint Mode of Computational Differentiation},
  journal = {ACM Transactions on Mathematical Software},
  volume = {26},
  number = {1},
  pages = {19--45},
  year = {2000},
  doi = {10.1145/347837.347846}
}

@article{hindmarsh2005sundials,
  author = {Hindmarsh, Alan C. and Brown, Peter N. and Grant, Keith E. and Lee, Steven L. and Serban, Radu and Shumaker, Dan E. and Woodward, Carol S.},
  title = {{SUNDIALS}: Suite of Nonlinear and Differential/Algebraic Equation Solvers},
  journal = {ACM Transactions on Mathematical Software},
  volume = {31},
  number = {3},
  pages = {363--396},
  year = {2005},
  doi = {10.1145/1089014.1089020}
}

@inproceedings{serban2005cvodes,
  author = {Serban, Radu and Hindmarsh, Alan C.},
  title = {{CVODES}: The Sensitivity-Enabled {ODE} Solver in {SUNDIALS}},
  booktitle = {Proceedings of the ASME 2005 International Design Engineering Technical Conferences and Computers and Information in Engineering Conference},
  address = {Long Beach, California, USA},
  year = {2005},
  doi = {10.1115/DETC2005-85597}
}

@book{hairer1996stiff,
  author = {Hairer, Ernst and Wanner, Gerhard},
  title = {Solving Ordinary Differential Equations {II}: Stiff and Differential-Algebraic Problems},
  series = {Springer Series in Computational Mathematics},
  volume = {14},
  edition = {2},
  publisher = {Springer},
  year = {1996},
  doi = {10.1007/978-3-642-05221-7}
}

@book{brenan1996dae,
  author = {Brenan, Kathryn E. and Campbell, Stephen L. and Petzold, Linda R.},
  title = {Numerical Solution of Initial-Value Problems in Differential-Algebraic Equations},
  series = {Classics in Applied Mathematics},
  volume = {14},
  publisher = {SIAM},
  year = {1996},
  doi = {10.1137/1.9781611971224}
}

@article{armijo1966minimization,
  author = {Armijo, Larry},
  title = {Minimization of Functions Having Lipschitz Continuous First Partial Derivatives},
  journal = {Pacific Journal of Mathematics},
  volume = {16},
  number = {1},
  pages = {1--3},
  year = {1966},
  doi = {10.2140/pjm.1966.16.1}
}

@book{nocedal2006optimization,
  author = {Nocedal, Jorge and Wright, Stephen J.},
  title = {Numerical Optimization},
  series = {Springer Series in Operations Research and Financial Engineering},
  edition = {2},
  publisher = {Springer},
  year = {2006},
  doi = {10.1007/978-0-387-40065-5}
}

@inproceedings{pascanu2013difficulty,
  author = {Pascanu, Razvan and Mikolov, Tomas and Bengio, Yoshua},
  title = {On the Difficulty of Training Recurrent Neural Networks},
  booktitle = {Proceedings of the 30th International Conference on Machine Learning},
  series = {Proceedings of Machine Learning Research},
  volume = {28},
  pages = {1310--1318},
  publisher = {PMLR},
  year = {2013},
  url = {https://proceedings.mlr.press/v28/pascanu13.html}
}
\clearpage
\appendix
\section{Additional Mathematical Details}

\subsection{From Gradient Error to Robust Descent}

Let $\hat g$ be the gradient consumed by the optimizer and let the unknown exact gradient be $g=\hat g+e$ with $\|e\|_2\leq \epsilon$. The first-order change along the proposed update $d=-\hat g$ is
\begin{equation}
  \langle g,d\rangle
  =
  -\|\hat g\|_2^2-\langle e,\hat g\rangle .
\end{equation}
The worst admissible error is anti-parallel to $\hat g$, so
\begin{equation}
  \sup_{\|e\|_2\leq\epsilon}\langle g,-\hat g\rangle
  =
  -\|\hat g\|_2^2+\epsilon\|\hat g\|_2 .
\end{equation}
Therefore, the sufficient condition for robust first-order descent is
\begin{equation}
  \|\hat g\|_2>\epsilon.
\end{equation}
With a margin $m\geq 0$, GradRepair-ODE uses the stricter condition
\begin{equation}
  \|\hat g\|_2^2-\epsilon\|\hat g\|_2>m.
\end{equation}
This gives the descent-safe step certificate used in the main text.

\subsection{Learning-Rate Dependence}

Under local $M$-smoothness,
\begin{equation}
  L(\theta-\eta \hat g)
  \leq
  L(\theta)
  +
  \eta\langle g,-\hat g\rangle
  +
  \frac{M\eta^2}{2}\|\hat g\|_2^2 .
\end{equation}
Substituting the worst-case bound gives
\begin{equation}
  L(\theta-\eta \hat g)-L(\theta)
  \leq
  -\eta\|\hat g\|_2(\|\hat g\|_2-\epsilon)
  +
  \frac{M\eta^2}{2}\|\hat g\|_2^2 .
\end{equation}
Thus a certified direction remains a descent step for any
\begin{equation}
  0<\eta<
  \frac{2(\|\hat g\|_2-\epsilon)}{M\|\hat g\|_2}.
\end{equation}
GradRepair-ODE does not need to estimate $M$ exactly; the inequality explains why the step certificate and the learning-rate scale interact.

\subsection{Directional Finite-Difference Error}

For $\phi(\alpha)=L(\theta+\alpha v)$, Taylor expansion gives
\begin{align}
  \phi(h) &= \phi(0)+h\phi'(0)+\frac{h^2}{2}\phi''(0)+\frac{h^3}{6}\phi'''(\xi_+),\\
  \phi(-h) &= \phi(0)-h\phi'(0)+\frac{h^2}{2}\phi''(0)-\frac{h^3}{6}\phi'''(\xi_-).
\end{align}
Subtracting and dividing by $2h$ yields
\begin{equation}
  \frac{\phi(h)-\phi(-h)}{2h}
  =
  \phi'(0)
  +
  O(h^2).
\end{equation}
Floating-point evaluation adds a competing $O(u/h)$ term, where $u$ is the roundoff scale \cite{higham2002accuracy}. This produces the classical finite-difference tradeoff: too large a step is truncation dominated, while too small a step is roundoff dominated. GradRepair-ODE therefore uses finite differences as directional evidence, not as absolute ground truth.

\subsection{Path Disagreement as an Error Witness}

Let $g$ denote the exact gradient and let $\hat g_a,\hat g_b$ be two independently computed candidates. Even without knowing $g$, their disagreement imposes a deterministic lower bound on the error of at least one candidate:
\begin{equation}
  \max\{\|\hat g_a-g\|_2,\|\hat g_b-g\|_2\}
  \geq
  \frac{1}{2}\|\hat g_a-\hat g_b\|_2 .
\end{equation}
This follows immediately from the triangle inequality,
\begin{equation}
  \|\hat g_a-\hat g_b\|_2
  \leq
  \|\hat g_a-g\|_2+\|\hat g_b-g\|_2 .
\end{equation}
The inequality gives a direct reason to inspect path disagreement. A large separation between two sensitivity paths proves that at least one path is inaccurate at the current step. It does not identify which path is wrong, so GradRepair-ODE combines disagreement with directional checks and solver diagnostics before selecting a repair.

\subsection{Finite-Difference Residual and Projection Error}

Let $q_v=\langle g,v\rangle$ be the exact directional derivative and let $D_h(\theta;v)$ be the centered finite-difference estimate. Suppose the finite-difference error along $v$ is bounded by $\beta_h$:
\begin{equation}
  |D_h(\theta;v)-q_v|\leq \beta_h .
\end{equation}
For a candidate $\hat g$, the observed residual satisfies
\begin{equation}
  |\langle \hat g,v\rangle-D_h(\theta;v)|
  \leq
  |\langle \hat g-g,v\rangle|+\beta_h ,
\end{equation}
and therefore
\begin{equation}
  |\langle \hat g-g,v\rangle|
  \geq
  |\langle \hat g,v\rangle-D_h(\theta;v)|-\beta_h .
\end{equation}
Thus a large finite-difference residual, after accounting for the finite-difference noise floor, lower-bounds the candidate's error in at least one tested direction. This is why the certificate treats finite-difference residuals as directional witnesses.

\subsection{Sensitivity Growth and Long-Horizon Risk}

For a smooth ODE, the state sensitivity $S(t)=\partial x(t)/\partial\theta$ satisfies the variational equation used in classical sensitivity analysis \cite{ascher1998computer,serban2005cvodes}:
\begin{equation}
  \frac{dS}{dt}
  =
  J_x(t)S(t)+J_\theta(t),
  \qquad
  J_x(t)=\frac{\partial f}{\partial x}(t,x(t),\theta),
  \quad
  J_\theta(t)=\frac{\partial f}{\partial\theta}(t,x(t),\theta).
\end{equation}
Gronwall's inequality gives the bound
\begin{equation}
  \|S(t)\|_2
  \leq
  \exp\!\left(\int_{t_0}^{t}\|J_x(s)\|_2\,ds\right)
  \left(
  \|S(t_0)\|_2
  +
  \int_{t_0}^{t}\|J_\theta(r)\|_2\,dr
  \right).
\end{equation}
We do not use this inequality as a tight numerical estimate. Large Jacobian norms, stiff transients, and long horizons can amplify small trajectory or sensitivity errors into large gradient discrepancies. The certificate's stiffness and solver-instability terms follow this variational structure.

\subsection{Ablation Replay Definitions}

The ablation table replays the same recorded diagnostic evidence under counterfactual decision policies. The ODE solves, gradient candidates, finite-difference residuals, and step labels are held fixed, so each row changes the reliability mechanism rather than the numerical episode. In the detect-only replay, the certificate is observed but the naive update is still applied; this measures the gap between warning and control. Removing finite-difference evidence suppresses the event-discontinuity signal and shows why path agreement alone can be over-permissive. Removing repair routing converts every non-trusted step into rejection, which exposes the cost of treating reliability as a binary gate. Removing step certification applies repaired gradients even when the final margin remains unsafe, isolating the last optimizer-level decision. The full GradRepair-ODE policy accepts trusted gradients, repairs repairable gradients, and rejects gradients whose descent direction is not certified. The replay compares decision logic without changing the underlying numerical evidence.

\subsection{Safeguard Definitions}

The clipping baseline rescales the naive gradient to unit norm when its Euclidean norm exceeds one, following the standard role of clipping as magnitude control \cite{pascanu2013difficulty}. The line-search baseline uses a backtracking sufficient-decrease check inspired by classical Armijo-style methods \cite{armijo1966minimization,nocedal2006optimization}. Both safeguards operate on the proposed update. They do not compute a gradient reliability certificate, compare sensitivity paths, or repair the numerical source of a wrong gradient direction.

\end{document}